\documentclass[lettersize,journal]{IEEEtran}
\usepackage{amsmath,amsfonts}
\usepackage{booktabs}
\usepackage{cite}
\usepackage{array}
\usepackage{textcomp}
\usepackage{amsthm}
\usepackage[english]{babel}
\newtheorem{theorem}{Theorem}
\newtheorem{lemma}{Lemma}
\newtheorem{remark}{Remark}
\newtheorem{definition}{Definition}
\newtheorem{assumption}{Assumption}
\usepackage{stfloats}
\usepackage{graphicx}
\usepackage[caption=false,font=footnotesize]{subfig}

\usepackage{url}
\usepackage{verbatim}
\usepackage{graphicx}
\usepackage{amssymb}
\usepackage{algorithm2e}
\usepackage{amsmath}
\allowdisplaybreaks
\RestyleAlgo{ruled}
\hyphenation{op-tical net-works semi-conduc-tor IEEE-Xplore}
\def\BibTeX{{\rm B\kern-.05em{\sc i\kern-.025em b}\kern-.08em
    T\kern-.1667em\lower.7ex\hbox{E}\kern-.125emX}}
\usepackage{balance}
\usepackage{xcolor}
\usepackage{algpseudocode}
\usepackage{ifthen}
\usepackage{xcolor} 
\usepackage{soul}   

\usepackage{multirow} 
\definecolor{lightblue}{RGB}{173,216,230} 
\sethlcolor{lightblue}                    

\begin{document}
\title{Coded Robust Aggregation for Distributed Learning under Byzantine Attacks}
\author{Chengxi Li,~\IEEEmembership{Member,~IEEE,} Ming Xiao,~\IEEEmembership{Senior Member,~IEEE,} and Mikael Skoglund,~\IEEEmembership{Fellow, IEEE}
\thanks{C. Li, M. Xiao and M. Skoglund are with the Division of Information Science and Engineering, School of Electrical Engineering and Computer Science, KTH Royal Institute of Technology, 10044 Stockholm, Sweden. (e-mail: chengxli@kth.se; mingx@kth.se; skoglund@kth.se). Corresponding author: Chengxi Li.}
}

\markboth{Journal of \LaTeX\ Class Files}%
{}

\maketitle

\begin{abstract}
In this paper, we investigate the problem of distributed learning (DL) in the presence of Byzantine attacks. For this problem, various robust bounded aggregation (RBA) rules have been proposed at the central server to mitigate the impact of Byzantine attacks. However, current DL methods apply RBA rules for the local gradients from the honest devices and the disruptive information from Byzantine devices, and the learning performance degrades significantly when the local gradients of different devices vary considerably from each other. To overcome this limitation, we propose a new DL method to cope with Byzantine attacks based on coded robust aggregation (CRA-DL). Before training begins, the training data are allocated to the devices redundantly. During training, in each iteration, the honest devices transmit coded gradients to the server computed from the allocated training data, and the server then aggregates the information received from both honest and Byzantine devices using RBA rules. In this way, the global gradient can be approximately recovered at the server to update the global model. Compared with current DL methods applying RBA rules, the improvement of CRA-DL is attributed to the fact that the coded gradients sent by the honest devices are closer to each other. This closeness enhances the robustness of the aggregation against Byzantine attacks, since Byzantine messages tend to be significantly different from those of honest devices in this case. We theoretically analyze the convergence performance of CRA-DL. Finally, we present numerical results to verify the superiority of the proposed method over existing baselines, showing its enhanced learning performance under Byzantine attacks.
\end{abstract}

\begin{IEEEkeywords}
Byzantine attacks, convergence analysis, distributed learning, gradient coding, robust aggregation.
\end{IEEEkeywords}

\section{Introduction}
\label{introduction} 
\IEEEPARstart{D}{istributed} learning (DL) has recently attracted significant attention \cite{chen2021distributed, verbraeken2020survey}. 
 In DL, the central server acts as a central processor with access to a very large dataset. To accelerate training on this large dataset, the server distributes the computational workload to multiple devices, which function as worker nodes. This is a common practice in distributed computing for machine learning applications.
Compared to training on a single device, DL leverages the computational resources of various edge devices, thereby increasing training efficiency \cite{shuvo2022efficient, wang2020convergence}. Typically, the training process of DL involves multiple iterations. In each iteration, the server first sends the global model to the devices. After receiving the global model, each device computes the local gradient based on its local dataset and transmits the local gradient to the server. The server aggregates the local gradients from all devices to obtain the global gradient to update the global model \cite{abdulrahman2020survey}.

Due to the distributed nature of the DL system, it is unrealistic for the server to continuously monitor the operational status of the devices to ensure they function properly at all times. Issues such as computation errors, crashes, and stalled processes may arise during training \cite{blanchard2017machine}. Besides, some external attackers may compromise devices before the deployment of the system by injecting malicious firmware or backdoors through the hardware or software supply chain, wherein vendors or their code are  modified with malicious purposes \cite{alkhadra2021solar}. These malfunctioning or compromised devices send incorrect messages during the training, which are known as Byzantine devices \cite{yang2020adversary,zhang2024nspfl,zhang2022lsfl,abrardo2016game, liu2020optimal,li2025sign}. DL systems affected by them are said to be under Byzantine attacks. To deal with Byzantine attacks, current DL approaches can be classified into two categories as follows. 

In the first category, various aggregation rules are designed at the server, which are robust to the Byzantine attacks. For instance, in \cite{yin2018byzantine}, coordinate-wise median and trimmed mean are adopted to aggregate the information from the devices, and the error rates for strongly convex, non-strongly convex, and smooth non-convex functions are analyzed. In \cite{karimireddy2021learning}, a robust iterative clipping aggregation rule is proposed, where momentum is incorporated to deal with time-coupled Byzantine attacks. In \cite{chen2017distributed}, the geometric median is used to aggregate the messages from the devices, resulting in a variant of the typical gradient descent method. In \cite{xia2019faba}, a fast aggregation method is proposed that removes outliers in the messages uploaded by the devices to obtain local gradients closer to the true ones. In \cite{xie2018phocas}, a trimmed mean-based approach that is also dimensionally Byzantine-resilient is proposed, which is demonstrated to have nearly linear time complexity. In \cite{blanchard2017machine}, the messages from the devices are aggregated using majority-based and squared-distance-based methods, where the vectors that minimize the distances to their closest vectors are selected as trustworthy.  In \cite{zhao2024huber}, a new aggregator at the server is designed based on minimization of Huber loss, which attains enhanced robustness under a certain ratio of Byzantine devices under the independent and identically distributed (i.i.d) assumption. In \cite{luan2024robust}, maximum correntropy aggregation (MCA) is proposed by using the maximum correntropy criterion to obtain the central value among all messages. 
In \cite{dong2023byzantine}, it is shown that most of the above state-of-the-art robust aggregation rules are all robust bounded aggregation (RBA) rules, where the bias between the aggregation result and the average of the messages from the honest devices is bounded by the the largest deviation of the messages from the honest devices. Although current DL methods with RBA rules at the server can achieve satisfactory Byzantine resilience under certain conditions, they have a significant shortcoming: degradation of learning performance when the local gradients of different devices vary considerably. This degradation occurs because the input to the RBA rules includes both the local gradients from honest devices and the disruptive information from Byzantine devices. When the local gradients of different devices vary significantly due to heterogeneity among subsets of the training data, the disruptive information from the Byzantine devices can more easily mislead and manipulate the output of the aggregator.

 In the second category, gradient coding methods are developed, where the training dataset is divided into subsets, and these subsets are redundantly assigned to devices before training. In this way, each device obtains multiple subsets. Leveraging this redundancy, in each training iteration, each device computes local gradients corresponding to its assigned subsets and encodes them into a coded gradient. Honest devices transmit the coded gradients to the server, while Byzantine devices send incorrect messages. Based on the received messages from all devices, the server can fully identify the Byzantine devices, enabling the accurate recovery of the true global gradient as if no Byzantine devices were present. For instance, in \cite{hong2024group}, coding techniques are combined with group-wise verification to deal with Byzantine attacks, specifically designed for matrix multiplication tasks in DL. In \cite{data2020data}, a method specifically designed for matrix-vector multiplication in DL is proposed based on error correction with real numbers under Byzantine attacks, which is proved to be information-theoretically optimal with deterministic guarantees. For more general DL problems, in \cite{hofmeister2024byzantine}, based on fractional repetition allocation of the training data, erroneous messages from the Byzantine devices are detected and transformed into erasures at the cost of additional local computations of the server and additional communication in each iteration. In \cite{chen2018draco}, the encoding of the local gradients is designed by using the fractional repetition code and cyclic repetition code, and the decoders are proposed based on majority vote and Fourier technique. However, these gradient coding techniques still require a very high level of redundancy in the allocation of training data among the devices to fully recover the true global gradient in each iteration, leading to significant computation and storage burdens on the devices. 

In addition to dealing with Byzantine devices in DL, gradient coding techniques have also been explored to address other problems, such as the non-responsive devices in DL commonly referred to as stragglers \cite{tandon2017gradient,ozfatura2019gradient,buyukates2022gradient,glasgow2021approximate, wang2019erasurehead, bitar2020stochastic, li2024distributed}. Before training begins, the subsets of the training data are allocated redundantly across devices. During training, non-straggler devices transmit coded gradients to the server in each iteration based on the local gradients computed from the local subsets, while stragglers do not transmit anything. The server can decode and recover the true global gradient using the received coded gradients from the non-stragglers. Depending on whether the true global gradient is recovered exactly or approximately, current gradient coding techniques for handling stragglers can be classified into exact gradient coding techniques \cite{tandon2017gradient,ozfatura2019gradient,buyukates2022gradient} and approximate gradient coding techniques \cite{glasgow2021approximate, wang2019erasurehead, bitar2020stochastic, li2024distributed}. Given that machine learning algorithms are inherently robust to noise, it may not be necessary to fully recover the true global gradient in each iteration. Obtaining an approximate version of the true global gradient may suffice for training machine learning models. As a result, approximate gradient coding techniques have gained significant attention recently, which require only a modest level of redundancy in the allocation of training data and induce lower computational and storage burdens on the devices. Among the existing approximate gradient coding techniques, stochastic gradient coding (SGC), originally proposed in \cite{bitar2020stochastic}, requires very simple encoding and decoding techniques while achieving satisfactory learning performance. In SGC, the subsets of the training data are allocated to devices in a pair-wise balanced manner, and this method has been applied in various DL scenarios with stragglers \cite{li2024distributed,li2024gradient}. Although SGC was originally proposed to cope with the stragglers in DL, its advantages could also be leveraged to combat Byzantine attacks in DL. Nonetheless, leveraging the strengths of SGC to improve Byzantine resilience remains a significant challenge and is yet to be thoroughly investigated. 

To overcome the shortcomings of existing techniques designed for DL under Byzantine attacks, we propose a new DL method based on coded robust aggregation (CRA-DL). In CRA-DL, before training begins, the training data are divided into subsets and allocated redundantly to the devices in a pair-wise balanced manner, motivated by the SGC scheme. In each training iteration, the server transmits the global model to all devices, and each device computes the local gradients based on its local training data subsets. Subsequently, the local gradients corresponding to different subsets are encoded to generate a single vector on each device, known as the coded gradient. Each honest device transmits its coded gradient to the server, while the Byzantine devices send arbitrarily incorrect messages. The server then receives the vectors from all devices and applies an RBA rule to these messages, which yields a final global update that approximates the global gradient. Finally, the global model is updated at the server with the global update. Note that, this is a meta algorithm that can be employed with any RBA rules proposed in the literature. 
We analyze the convergence performance of CRA-DL. Additionally, we present ample numerical results to verify that CRA-DL outperforms existing baselines. Our contributions are listed as follows: 
\begin{enumerate}
    \item We propose a new method, i.e., CRA-DL, to deal with the DL problem under Byzantine attacks by simultaneously exploiting the advantages of SGC and RBA rules. To the best of our knowledge, this is the first work to integrate SGC with RBA in the context of Byzantine-resilient DL. This integration is not a trivial combination, but rather a principled design that leverages data allocation redundancy to bring coded gradients from honest devices closer together. This, in turn, amplifies the effectiveness and robustness of the aggregation at the server and leads to improved learning performance.
    \item We analyze the convergence performance of CRA-DL for non-convex loss functions and show that the convergence rate improves and the asymptotic solution error decreases as the redundancy in data allocation increases and the robustness of the RBA rules increases. This offers an insight that, to the best of our knowledge, has not been explored in existing literature.
    \item  Our numerical results verify that CRA-DL significantly improves learning performance and enhances robustness to Byzantine attacks in DL in various scenarios. 
\end{enumerate}

Based on the novelty of our work, the main differences between this paper and existing literature are summarized as follows: 
    \begin{enumerate}
    \item Compared to existing methods that directly apply RBA rules at the server to aggregate local gradients \cite{yin2018byzantine,karimireddy2021learning,chen2017distributed,xia2019faba,xie2018phocas,blanchard2017machine,zhao2024huber,dong2023byzantine}, the proposed method aggregates coded vectors at the server using RBA rules to generate the global model update. Since the coded vectors from honest devices are more similar, the robustness of the aggregation at the server is enhanced, thereby improving learning performance.
    
    \item In traditional gradient coding approaches dealing with Byzantine attacks \cite{hong2024group,data2020data,hofmeister2024byzantine,chen2018draco}, the primary objective is to fully identify Byzantine devices at the server in order to accurately recover the true global gradient as if no Byzantine devices were present. In contrast, the key idea in our method is to apply RBA rules at the server to aggregate coded gradients, allowing for an approximate recovery of the global gradient, which is then used to update the global model. Compared to current gradient coding methods, the proposed method attains Byzantine robustness with a significantly lower level of redundancy in the allocation of training data among devices. In other words, the proposed method imposes lower computational and storage burdens on the devices.
 \item In SGC, the original goal is to address the straggler problem in DL, rather than to defend against Byzantine attacks. Based on redundancy in data allocation, coded gradients from the non-straggler devices allow the missing information to be recovered at the server. However, directly applying SGC does not provide satisfactory resilience against Byzantine attacks. This is because, under Byzantine attacks, the challenge is not merely missing information but the presence of incorrect or malicious messages from Byzantine devices, which can mislead the aggregation process. How to leverage and aggregate coded gradients, using the same redundancy in data allocation and encoding principles as SGC, to enhance Byzantine resilience is a fundamentally different problem from using SGC solely to mitigate stragglers. This remains an open challenge in the existing literature. In this work, although we adopt the same redundancy in data allocation and encoding principles as SGC, we innovatively apply RBA rules at the server to aggregate the coded gradients and thus enhance Byzantine robustness, instead of relying on the original aggregation rule used in SGC.
    
\end{enumerate}

The structure of this paper is as follows. In Section \ref{problem model}, we introduce the problem model. In Section \ref{our method}, we propose our method and describe the implementation procedure. In Section \ref{performance analysis}, we present the performance analysis from a theoretical perspective. The numerical results are shown in Section \ref{simulations} to demonstrate the superior performance of the proposed method. Finally, concluding remarks are provided in Section \ref{conclusions}. 

\section{Problem Model}
\label{problem model}
The considered problem is introduced as follows. There are $N$ devices and a central server in the DL system, whose goal is to train a model by solving the optimization problem \cite{chen2021distributed, liu2022distributed,chen2018draco}: 
\begin{align}
    \label{basic problem}
   {{\mathbf{x}}^*} = \arg {\min _{{\mathbf{x}} \in {\mathbb{R}^D}}}F\left( {\mathbf{x}} \right), 
\end{align}
where ${\mathbf{x}}$ represents the model parameter vector, and $F\left( {\mathbf{x}} \right)$ is the overall training loss defined as
\begin{align}
\label{overall loss}
F({\mathbf{x}}) = \sum\limits_{\varrho  \in \mathcal{D}} {l\left( {{\mathbf{x}},\varrho } \right)} ,
\end{align}
where \({l}\left( {{\mathbf{x}},{\varrho}} \right): \mathbb{R}^D \to \mathbb{R}\) represents the training loss based on the training data sample \({{\varrho}}\) in the training dataset \({\mathcal{D}}\). Without specification, all vectors in this paper are column vectors. 

Under the typical DL framework\cite{mishchenko2024distributed}, before the training starts, the training dataset \({\mathcal{D}}\) is divided into $N$ non-overlapping subsets and allocated to $N$ devices so that each device obtains one subset. During the training, in iteration $t$, the current global model ${\mathbf{x}}^t$ is transmitted from the server to the devices. Then, each device computes the local gradient corresponding to its subset and sends the local gradient to the server. After receiving the local gradients from the devices, the server aggregates them to form the global gradient to update the global model and to obtain ${\mathbf{x}}^{t+1}$ \cite{chen2018draco}.  

In the above system, some devices may be under Byzantine attacks due to malicious attacks or malfunction of the devices \cite{yang2020adversary,zhang2024nspfl,zhang2022lsfl,abrardo2016game, liu2020optimal}. Let us define \({\mathcal{B}^t}\) as the set containing all indices of the Byzantine devices in iteration $t$, and use \({\mathcal{H}^t}\) as the set containing all indices of the honest devices in iteration $t$. During each iteration, the honest devices transmit truth-worthy messages to the server as expected, while the Byzantine devices transmit arbitrarily incorrect information to the server. Without prior knowledge of the Byzantine devices, it is assumed that in each iteration, a certain fraction \(\alpha\) of the devices are Byzantine, while the rest are honest \cite{dong2023byzantine}. The server does not know the identities of the devices beforehand and only knows the value of $\alpha$ \cite{dong2023byzantine}. The identities of the devices are considered to be random and independent across iterations \cite{chen2018draco}, which implies that the identities of the devices are non-persistent across iterations. If the identities were persistent, the server could potentially identify the Byzantine devices by accumulating information over time. From this perspective, the non-persistent case represents the strongest form of Byzantine attacks.   

For the above problem, our aim is to enhance the learning performance under Byzantine attacks.

\section{The Proposed Method: CRA-DL}
\label{our method}

In this section, we describe the implementation details of the proposed CRA-DL method. 

Before the training starts, the training dataset \(\mathcal{D}\) is divided into \(M\) subsets, represented as \(\mathcal{D} = \left\{ \mathcal{D}_1, \dots, \mathcal{D}_M \right\}\). These subsets are allocated to \(N\) devices in a pair-wise balanced manner, motivated by the advantages of the SGC scheme \cite{bitar2020stochastic}. Specifically, each device \(i\) holds \(r\) subsets from the total set of subsets\footnote{We focus on the case where the same number of subsets is allocated to each device. It is worth noting that the analysis in this paper can be easily extended to the case where different numbers of subsets are assigned to each device.}. The number of subsets held by both device \(i\) and device \(j\) is \(\frac{r^2}{M}\), for \(i \neq j\). Let us denote the number of devices that hold subset \(\mathcal{D}_k\) as \(d_k\), \(\forall k\). We define a data allocation matrix \(\mathbf{S}\), where \(s(i,k)\) is the \((i,k)\)-th element. If \(s(i,k) = 1\), subset \(\mathcal{D}_k\) is allocated to device \(i\); otherwise, it is not. Based on the above setting, the problem in (\ref{basic problem}) can be equivalently expressed as:
\begin{align}
    \label{basic problem 2}
   {{\mathbf{x}}^*} = \arg {\min _{{\mathbf{x}} \in {\mathbb{R}^D}}}F\left( {\mathbf{x}} \right) \triangleq \arg {\min _{{\mathbf{x}} \in {\mathbb{R}^D}}}\sum\limits_{k = 1}^M {{f_k}\left( {\mathbf{x}} \right)}, 
\end{align}
where \(f_k({\mathbf{x}}): \mathbb{R}^D \to \mathbb{R}\) denotes the training loss associated with subset \({\mathcal{D}_k}\):
\begin{align}
\label{fix}
{f_k}({\mathbf{x}}) = \sum\limits_{\varrho  \in {\mathcal{D}_k}} {l\left( {{\mathbf{x}},\varrho } \right)}.
\end{align}

Next, during the training process, in iteration \(t\), the server sends the current global model \(\mathbf{x}^t\) to all devices. After that, each device $i$ computes the local gradients associated with its local subsets and obtains $\left\{ {\nabla {f_k}\left( {{{\mathbf{x}}^t}} \right)|k\in \left\{ {1,...,M} \right\},s\left( {i,k} \right) \ne 0} \right\}$. Based on that, device $i$ encodes the local gradients into a single vector as
\begin{align}
    \label{encoding}
    {\mathbf{g}}_i^t = \sum\limits_{k \in \left\{ {\left. k \right|s\left( {i,k} \right) \ne 0} \right\}} {\frac{1}{{{d_k}}}} \nabla {f_k}\left( {{{\mathbf{x}}^t}} \right).
\end{align}
The operation of linear combination in (\ref{encoding}) is referred to as coding, as in much of the existing literature related to gradient coding \cite{tandon2017gradient, hofmeister2024byzantine}, because this linear combination is designed so that if the server receives incorrect messages from a subset of devices, it can still approximately recover the true full gradient, as will be shown later in this paper. This process resembles how error-correcting codes recover a message even when some symbols are corrupted.

If device $i$ is honest, i.e., \(i \in {\mathcal{H}^t}\), the coded gradient ${\mathbf{g}}_i^t$ is sent to the server, $\forall i$. If device \(j\) is a Byzantine device, i.e., \(j \in \mathcal{B}^t\), it transmits incorrect information to the server, denoted by \(\mathbf{b}_j^t\), which is the same size as \(\mathbf{g}_j^t\) but contains different elements. With the received messages from the devices, denoted by \(\{{{\left\{ {{\mathbf{g}}_i^t} \right\}}_{i \in {\mathcal{H}^t}}},{{\left\{ {{\mathbf{b}}_j^t} \right\}}_{j \in {\mathcal{B}^t}}}\}\), the server adopts an RBA rule\footnote{{There are also other definitions of robust aggregation rules, such as those in \cite{karimireddy2021learning} and \cite{wu2023byzantine}. In this paper, however, we focus on RBA rules, as many state-of-the-art robust aggregation rules fall within their scope.}} $A(\cdot)$ to yield the global model update as 
\begin{align}
    \label{global model update}
    {{\mathbf{\hat g}}^t} = A\left( {{{\left\{ {{\mathbf{g}}_i^t} \right\}}_{i \in {\mathcal{H}^t}}},{{\left\{ {{\mathbf{b}}_j^t} \right\}}_{j \in {\mathcal{B}^t}}}} \right),
\end{align}
which is an approximate version of the global gradient. Here, RBA rules are utilized, considering that various state-of-the-art robust aggregation rules with recent advances fall within their scope \cite{dong2023byzantine}. 
For example, the coordinate-wise median belongs to the class of RBA rules, and its specific expression is given below. Given $N$ vectors $\mathbf{z}_1, \ldots, \mathbf{z}_N \in \mathbb{R}^D$, the coordinate-wise median is defined as:
\begin{align}
\label{median}
    &\mathrm{Median}(\mathbf{z}_1, \ldots, \mathbf{z}_N) = \nonumber \\
&\left[ 
\mathrm{median}\big( z_{1,1}, \ldots, z_{N,1} \big), \;
\ldots, \;
\mathrm{median}\big( z_{1,D}, \ldots, z_{N,D} \big)
\right]^\top,
\end{align}
where $z_{n,d}$ denotes the $d$-th coordinate of the $n$-th vector.
RBA rules have been well-defined in \cite{dong2023byzantine}, and the definition is provided below. 
\begin{definition}[RBA rules \cite{dong2023byzantine}]
\label{def_robust agg}
Suppose there are $N_1$ messages ${{\mathbf{z}}_1},...,{{\mathbf{z}}_{{N_1}}} \in {\mathbb{R}^D}$ from $N_1$ honest devices and $N_2$ messages ${{{\mathbf{\tilde z}}}_1},...,{{{\mathbf{\tilde z}}}_{{N_2}}} \in {\mathbb{R}^D}$ from $N_2$ Byzantine devices. The fraction of Byzantine devices is $\alpha=\frac{N_2}{N_1+N_2}$. An aggregation rule $A(\cdot)$ is an RBA rule, if the difference between the aggregation result and the average of the messages from the honest devices is bounded by
\begin{align}
    \label{A robust bound}
    {\left\| {A\left( {{{\left\{ {{{\mathbf{z}}_i}} \right\}}_{i \in \left\{ {1,...,{N_1}} \right\}}},{{\left\{ {{{{\mathbf{\tilde z}}}_j}} \right\}}_{j \in \left\{ {1,...,{N_2}} \right\}}}} \right) - {\mathbf{\bar z}}} \right\|^2} \leqslant C_\alpha ^2{\varsigma},
\end{align}
where ${\mathbf{\bar z}}$ is the average of the messages from the honest devices denoted by ${\mathbf{\bar z}}= \frac{1}{{{N_1}}}\sum\limits_{i = 1}^{{N_1}} {{{\mathbf{z}}_i}}$, $\varsigma$ is defined as $\varsigma  = {\max _{i \in \left\{ {1,...,{N_1}} \right\}}}{\left\| {{\mathbf{\bar z}} - {{\mathbf{z}}_i}} \right\|^2}$, and $C_\alpha^2$ is a constant determined by the value of $\alpha$. The values of \(C_\alpha^2\) for some commonly used RBA rules are provided in Table I \cite{dong2023byzantine}, where $N=N_1+N_2$. From (\ref{A robust bound}), it can be seen that a more accurate aggregation result can be achieved when the messages sent by the honest devices are closer to each other.
\end{definition}
Note that the proposed method is a meta-algorithm that can be adapted based on the choice of any particular RBA rule. Consequently, various state-of-the-art RBA rules, including those introduced in \cite{yin2018byzantine,karimireddy2021learning,chen2017distributed,xia2019faba,xie2018phocas,blanchard2017machine,dong2023byzantine}, can be applied within our proposed method. 
\begin{table}
\centering
\caption{The values of $C_\alpha^2$ of some commonly used RBA rules \cite{dong2023byzantine}}
\begin{tabular}{ll}
\toprule
\textbf{RBA Rules} & \textbf{$C_\alpha^2$} \\
\midrule
Coordinate-wise median \cite{yin2018byzantine} & $\frac{1}{{2{{\left( {1 - \alpha } \right)}^2}}}{\left[ {\min \left\{ {2\sqrt {N - N\alpha } ,\sqrt D } \right\}} \right]^2}$ \\
Trimmed mean \cite{yin2018byzantine} & $\frac{{2\alpha \left( {1 - \alpha } \right)}}{{{{\left( {1 - 2\alpha } \right)}^2}}}$ \\
Geometric median \cite{chen2017distributed} & ${\left[ {\frac{{2\left( {1 - \alpha } \right)}}{{1 - 2\alpha }}} \right]^2}$ \\
Krum \cite{blanchard2017machine} & $2{\left( {1 + \sqrt {\frac{{1 - \alpha }}{{1 - 2\alpha }}} } \right)^2}$ \\
Phocas \cite{xie2018phocas} & $4 + \frac{{12\alpha \left( {1 - \alpha } \right)}}{{{{\left( {1 - 2\alpha } \right)}^2}}}$ \\
FABA \cite{xia2019faba} & $4\left( {\frac{{N\alpha }}{{N - N\alpha }} + \frac{{N + 1 - N\alpha }}{{N - N\alpha }}\frac{{N\alpha }}{{N - 3N\alpha }}} \right)$ \\
\bottomrule
\end{tabular}
\end{table}

At the end of iteration $t$, the global model is updated at the server as
\begin{align}
    \label{update global model}
    {{\mathbf{x}}^{t + 1}} = {{\mathbf{x}}^t} - {\gamma ^t}{{\mathbf{\hat g}}^t},
\end{align}
where $\gamma ^t$ is the learning rate. The paradigm of the proposed method is shown as Fig.~\ref{fig: paradigm_method}, which is also presented as Algorithm~\ref{alg1}. 

\begin{algorithm}
\label{alg1}
\caption{CRA-DL}
\KwIn{Training data $\mathcal{D} = \{\mathcal{D}_1, \dots, \mathcal{D}_M\}$, learning rate $\{\gamma^t\}$.}
\KwOut{Trained model $\mathbf{x}^{T+1}$.}

\textbf{Initialization:} Initialize the model $\mathbf{x}^0$.

\For{$t = 0$ \KwTo $T$}{
    Server sends the global model $\mathbf{x}^t$ to all devices.

    \For{each device $i$ in parallel}{
        Compute local gradients: $\{\nabla f_k(\mathbf{x}^t) | s(i, k) = 1\}$. \\
        Encode local gradients as (\ref{encoding}). \\
        \eIf{$i \in \mathcal{H}^t$ (honest)}{
            Transmit $\mathbf{{g}}_i^t$ to the server.
        }{
            Transmit an incorrect vector $\mathbf{{b}}_i^t$ to the server.
        }
    }

    Server receives $\{\mathbf{{g}}_i^t\}_{i \in \mathcal{H}^t}$ and $\{\mathbf{{b}}_j^t\}_{j \in \mathcal{B}^t}$. \\
    Aggregate the messages using an RBA rule as (\ref{global model update}). \\
    Update the model as (\ref{update global model}).
}

\Return{$\mathbf{x}^{T+1}$}
\end{algorithm}

\begin{figure*}
    \centering
        \includegraphics[width=0.7\linewidth]{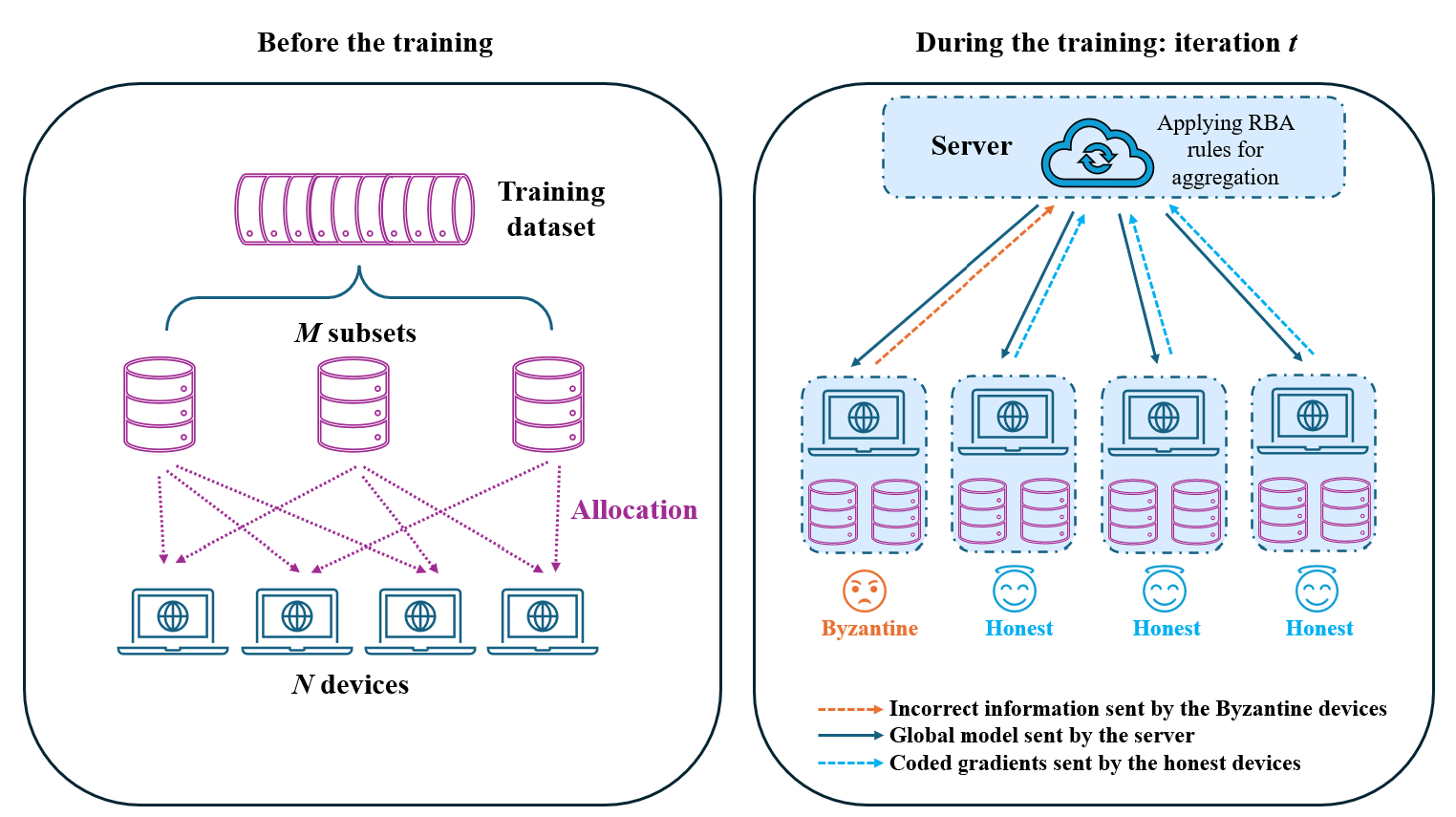}
    \caption{The paradigm of the proposed method.}
    \label{fig: paradigm_method}
\end{figure*}

    From a high-level perspective, compared with existing methods that apply RBA rules, the proposed method leverages redundancy in data allocation so that each honest device encodes its local gradients into a coded vector, rather than computing a single local gradient without redundancy. Based on that, the server aggregates these coded vectors together with potentially corrupted messages from Byzantine devices using RBA rules, rather than directly aggregating original local gradients alongside Byzantine messages, as done in conventional approaches. In this way, by leveraging the redundancy in data allocation, coded gradients are closer to each other compared to the original local gradients. By increasing the redundancy in data allocation, the coded gradients become increasingly similar. This will be analytically demonstrated in Section~\ref{performance analysis}. According to the properties of RBA rules implied by Definition~\ref{def_robust agg}, the difference between the aggregation output and the average of the messages from the honest devices is reduced by increasing the redundancy in data allocation. In this way, a more accurate global gradient can be recovered at the server under Byzantine attacks, which is then used to update the global model. This process improves learning performance and enhances the robustness against Byzantine attacks. 

\section{Performance Analysis}
\label{performance analysis}
In this section, we analyze the convergence performance of CRA-DL. First, let us state the assumptions, which have been widely used in the related field.  
\begin{assumption}
    \label{smooth assumption}
The overall training loss $F$ is $L$-smooth, which indicates the following inequality \cite{beznosikov2023biased, gorbunov2021marina}:
    \begin{align}
        \label{smooth assum}
  F\left( {\mathbf{x}} \right) \leq F\left( {\mathbf{y}} \right) + \left\langle {\nabla F\left( {\mathbf{y}} \right),{\mathbf{x}} - {\mathbf{y}}} \right\rangle  + \frac{L}{2}{\left\| {{\mathbf{x}} - {\mathbf{y}}} \right\|^2},\forall {\mathbf{x}},{\mathbf{y}}.
  \end{align}
\end{assumption}
\begin{assumption}
    \label{assp bounded heter}
    The heterogeneity among the subsets $\left\{ {{\mathcal{D}_1},...,{\mathcal{D}_M}} \right\}$ is bounded, indicating \cite{zhu2023byzantine} 
    \begin{align}
        \label{bounded heterogenity}
        \left\| {\nabla {f_i}\left( {\mathbf{x}} \right) - \frac{1}{M}\nabla F\left( {\mathbf{x}} \right)} \right\|^2 \leqslant {\beta ^2},\forall i,\forall {\mathbf{x}}.
    \end{align}
\end{assumption}
\begin{assumption}
    \label{lower bound}
    For some constant ${{F^*}}$, it holds that \cite{jin2024sign}
    \begin{align}
        \label{lower bound of F}
        F\left( {\mathbf{x}} \right) \geqslant {F^*},\forall \mathbf{x},
    \end{align}
which implies the overall training loss is lower bounded by $F^*$. 
\end{assumption}
Let us present two lemmas which aid the derivation of the main theorem.
\begin{lemma}
    \label{max distance}
    The maximum difference between any two coded gradients can be bounded as
    \begin{align}
        \label{lemma max distance}
 & {\max _{i,j \in \left\{ {1,...,N} \right\}}}{\left\| {{\mathbf{g}}_i^t - {\mathbf{g}}_j^t} \right\|^2} \nonumber \\
   \leqslant &8\frac{1}{{d_{\min }^2}}{\left( {r - \frac{{{r^2}}}{M}} \right)^2}\left( {{\beta ^2} + \frac{1}{{{M^2}}}{{\left\| {\nabla F\left( {\mathbf{x}^t} \right)} \right\|}^2}} \right),
    \end{align}
where \({d_{\min }} \triangleq \min \left\{ {{d_1},...,{d_M}} \right\}\).
\end{lemma}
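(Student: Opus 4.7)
The plan is to expose the cancellation of common terms in $\mathbf{g}_i^t - \mathbf{g}_j^t$ on the subsets that devices $i$ and $j$ share, and then control the surviving terms via Jensen's inequality combined with Assumption~\ref{assp bounded heter}. Let $A_i = \{k : s(i,k) = 1\}$, so $|A_i| = r$ and, by the pair-wise balanced construction, $|A_i \cap A_j| = r^2/M$, giving $|A_i \setminus A_j| = |A_j \setminus A_i| = \delta \triangleq r - r^2/M$. The terms indexed by $A_i \cap A_j$ appear with identical coefficients $1/d_k$ in both $\mathbf{g}_i^t$ and $\mathbf{g}_j^t$ and therefore cancel in the difference, yielding
\begin{align*}
\mathbf{g}_i^t - \mathbf{g}_j^t = \sum_{k \in A_i \setminus A_j}\frac{1}{d_k}\nabla f_k(\mathbf{x}^t) - \sum_{k \in A_j \setminus A_i}\frac{1}{d_k}\nabla f_k(\mathbf{x}^t).
\end{align*}

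Next, I would apply the elementary inequality $\|u - v\|^2 \leq 2\|u\|^2 + 2\|v\|^2$ to this decomposition, and then Jensen's inequality $\|\sum_{\ell=1}^\delta v_\ell\|^2 \leq \delta \sum_{\ell=1}^\delta \|v_\ell\|^2$ to each of the two sums separately. Replacing $1/d_k^2$ by the uniform upper bound $1/d_{\min}^2$ then produces
\begin{align*}
\|\mathbf{g}_i^t - \mathbf{g}_j^t\|^2 \leq \frac{2\delta}{d_{\min}^2}\sum_{k \in A_i \setminus A_j}\|\nabla f_k(\mathbf{x}^t)\|^2 + \frac{2\delta}{d_{\min}^2}\sum_{k \in A_j \setminus A_i}\|\nabla f_k(\mathbf{x}^t)\|^2.
\end{align*}

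Finally, Assumption~\ref{assp bounded heter} combined with $\|a + b\|^2 \leq 2\|a\|^2 + 2\|b\|^2$ gives the uniform bound $\|\nabla f_k(\mathbf{x}^t)\|^2 \leq 2\beta^2 + \frac{2}{M^2}\|\nabla F(\mathbf{x}^t)\|^2$ for every $k$. Since there are exactly $2\delta$ surviving indices across the two sums, multiplying through and substituting $\delta = r - r^2/M$ produces precisely $\frac{8(r - r^2/M)^2}{d_{\min}^2}\bigl(\beta^2 + \frac{1}{M^2}\|\nabla F(\mathbf{x}^t)\|^2\bigr)$, matching the claim; because every inequality used is uniform in the choice of the pair $(i,j)$, the bound is inherited by the maximum. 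The only step that requires genuine care is the combinatorial accounting of the symmetric difference under the pair-wise balanced allocation, where one must be sure that every $k \in A_i \cap A_j$ enters both coded gradients with the \emph{same} coefficient $1/d_k$ so that cancellation is exact; the remainder is a routine chain of triangle-inequality and Jensen steps.
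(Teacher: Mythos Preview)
Your proposal is correct and follows essentially the same approach as the paper: the paper likewise cancels the shared terms indexed by $A_i\cap A_j$, applies $\|u-v\|^2\le 2\|u\|^2+2\|v\|^2$, then Jensen on each $\delta$-term sum, then adds and subtracts $\tfrac{1}{M}\nabla F(\mathbf{x}^t)$ to invoke Assumption~\ref{assp bounded heter}, and finally replaces $1/d_k^2$ by $1/d_{\min}^2$. The only cosmetic difference is that the paper keeps the factor $1/d_k$ inside until the very last step, whereas you pull it out one step earlier; the resulting constants are identical.
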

\begin{proof}
    Please see Appendix~\ref{appendix lemma max distance}. 
\end{proof}
\begin{remark}
\label{remark_closer}
     When the values of \( d_1, \ldots, d_M \) do not vary significantly from each other\footnote{{In practice, this case is of particular importance, as fairness in DL is a major concern. This implies that each subset of the training data should be allocated to a similar number of devices, ensuring that all subsets contribute more equally to the trained model.}}, it holds that \( d_{\min}M \approx Nr \). Under this condition, we can rewrite (\ref{lemma max distance}) as
     \begin{align}
     \label{rew max distance}
         &{\max _{i,j \in \left\{ {1,...,N} \right\}}}{\left\| {{\mathbf{g}}_i^t - {\mathbf{g}}_j^t} \right\|^2}{\text{ }} \nonumber\\
         \leqslant & {\text{ }}8\frac{{{{\left( {M - r} \right)}^2}}}{{{N^2}}}\left( {{\beta ^2} + \frac{1}{{{M^2}}}{{\left\| {\nabla F\left( {{{\mathbf{x}}^t}} \right)} \right\|}^2}} \right).
     \end{align}
\end{remark}
From (\ref{rew max distance}), it can be observed that as the value of $r$ increases, meaning greater redundancy in data allocation, the maximum difference between any two coded gradients is bounded by a smaller value, which implies that the coded gradients are closer to each other.  As a special case, when $r = M$, the maximum difference between any two coded gradients becomes zero. In this case, all coded gradients sent by the honest devices are identical. In contrast, in existing methods based on RBA rules, there is no data allocation redundancy, and the maximum difference between any two local gradients sent by honest devices is determined by $\beta$, as defined in Assumption~\ref{assp bounded heter}. Without loss of generality, in the case where $M = N$, device $i$ transmits ${\nabla {f_i}\left( \mathbf{x}^t \right)}$ to the server, and device $j$ transmits ${\nabla {f_j}\left( \mathbf{x}^t \right)}$ to the server, for all $i, j$. In this case, the maximum difference between any two local gradients sent by the honest devices can be bounded as follows:
 \begin{align}
 \label{RBA existing dist}
 & {\max _{i,j}}{\left\| {\nabla {f_i}\left( {{{\mathbf{x}}^t}} \right) - \nabla {f_j}\left( {{{\mathbf{x}}^t}} \right)} \right\|^2} \nonumber \\
   \leqslant & {\max _{i,j}}{\left\| {\nabla {f_i}\left( {{{\mathbf{x}}^t}} \right) - \frac{1}{M}\nabla F\left( {{{\mathbf{x}}^t}} \right) + \frac{1}{M}\nabla F\left( {{{\mathbf{x}}^t}} \right) - \nabla {f_j}\left( {{{\mathbf{x}}^t}} \right)} \right\|^2} \nonumber \\
    \leqslant & 2{\max _i}{\left\| {\nabla {f_i}\left( {{{\mathbf{x}}^t}} \right) - \frac{1}{M}\nabla F\left( {{{\mathbf{x}}^t}} \right)} \right\|^2} \nonumber\\
    &+ 2{\max _j}{\left\| {\nabla{f_j}\left( {{{\mathbf{x}}^t}} \right) - \frac{1}{M}\nabla F\left( {{{\mathbf{x}}^t}} \right)} \right\|^2}
   \leqslant  4{\beta ^2}.
 \end{align}
 By comparing (\ref{rew max distance}) and (\ref{RBA existing dist}), it can be seen that the maximum difference among the messages sent by the honest devices can be reduced in the proposed method by increasing the level of redundancy in the training data allocation, potentially approaching zero. In contrast, this reduction is not possible in existing DL methods with RBA rules. Note from (\ref{A robust bound}) that the robustness of the aggregation at the server increases, and a more accurate estimate of the global gradient can be obtained when the messages sent by the honest devices are closer to each other. This applies to both the proposed method and existing DL methods that employ RBA rules. Based on that, the proposed method is more likely to achieve better learning performance compared with existing DL methods with RBA rules.

\begin{lemma}
    \label{honest average}
    Let us denote the average of the messages from the honest devices in iteration $t$ as
\begin{align}
    \label{honest average information}
    {{\mathbf{\bar g}}^t} = \frac{1}{{\left| {{\mathcal{H}^t}} \right|}}\sum\limits_{i \in {\mathcal{H}^t}} {{\mathbf{g}}_i^t}.
\end{align}
We can bound ${{\mathbf{\bar g}}^t}$ conditioned on the previous iterations as
\begin{align}
    \label{bound honest average}
    &\mathbb{E}\left( {\left. {{{\left\| {{{{\mathbf{\bar g}}}^t}} \right\|}^2}} \right|{\mathcal{F}^t}} \right)\nonumber\\
    \leqslant & \frac{{\left( {{\phi _1} - {\phi _2}} \right)2{r^2}}}{{{{\left( {1 - \alpha } \right)}^2}Nd_{\min }^2}}\left[ {{\beta ^2} + \frac{1}{{{M^2}}}{{{\left\| {\nabla F\left( {{{\mathbf{x}}^t}} \right)} \right\|}^2}}} \right]\nonumber\\
    &+ \frac{{{\phi _2}}}{{{{\left( {1 - \alpha } \right)}^2}{N^2}}}{{{\left\| {\nabla F\left( {{{\mathbf{x}}^t}} \right)} \right\|}^2}},
\end{align}
where $\mathbb{E}\left( {\left. {\cdot} \right|{\mathcal{F}^t}} \right)$ denotes the expectation conditioned on the previous iterations $0,...,t-1$, and $\phi_1$ and $\phi_2$ are defined as the following constants:
\begin{align}
    \label{phi1 and 2}
    \phi_1\triangleq1-\alpha,
    \phi_2\triangleq\frac{{\left( {1 - \alpha } \right)\left( {N - N\alpha  - 1} \right)}}{{N - 1}}.
\end{align}
\end{lemma}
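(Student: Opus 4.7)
My plan is to expand $\bar{\mathbf{g}}^t$ as a sum over all $N$ devices weighted by honesty indicators, then evaluate the conditional expectation by computing the first two marginals of the random honest set $\mathcal{H}^t$. Writing
\begin{equation*}
\bar{\mathbf{g}}^t=\frac{1}{(1-\alpha)N}\sum_{i=1}^{N}\mathbb{1}\{i\in\mathcal{H}^t\}\,\mathbf{g}_i^t
\end{equation*}
and squaring, each $\mathbf{g}_i^t$ is $\mathcal{F}^t$-measurable, so the only remaining randomness is the choice of $\mathcal{H}^t$. Since $\mathcal{H}^t$ is uniformly drawn from all size-$(1-\alpha)N$ subsets of $\{1,\dots,N\}$, a sampling-without-replacement calculation yields $\Pr(i\in\mathcal{H}^t)=1-\alpha=\phi_1$ and, for $i\neq j$, $\Pr(i,j\in\mathcal{H}^t)=\phi_2$ exactly as defined in the statement. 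This reduces $\mathbb{E}[\|\bar{\mathbf{g}}^t\|^2|\mathcal{F}^t]$ to $\frac{1}{(1-\alpha)^2N^2}\bigl[\phi_1\sum_i\|\mathbf{g}_i^t\|^2+\phi_2\sum_{i\neq j}\langle\mathbf{g}_i^t,\mathbf{g}_j^t\rangle\bigr]$.

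The next step is the algebraic identity
\begin{equation*}
\phi_1\sum_i\|\mathbf{g}_i^t\|^2+\phi_2\sum_{i\neq j}\langle\mathbf{g}_i^t,\mathbf{g}_j^t\rangle=(\phi_1-\phi_2)\sum_i\|\mathbf{g}_i^t\|^2+\phi_2\bigl\|\textstyle\sum_i\mathbf{g}_i^t\bigr\|^2,
\end{equation*}
which cleanly isolates the full aggregate. I would then exchange the order of summation in the encoder definition (\ref{encoding}) and use the fact that subset $\mathcal{D}_k$ appears on exactly $d_k$ devices, so the $1/d_k$ factors telescope to give $\sum_i\mathbf{g}_i^t=\sum_k\nabla f_k(\mathbf{x}^t)=\nabla F(\mathbf{x}^t)$. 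Substituting this immediately produces the $\frac{\phi_2}{(1-\alpha)^2N^2}\|\nabla F(\mathbf{x}^t)\|^2$ term of the target bound.

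The remaining task is to bound $\sum_i\|\mathbf{g}_i^t\|^2$. Applying Cauchy--Schwarz to the $r$-term sum defining $\mathbf{g}_i^t$, together with $1/d_k\leq 1/d_{\min}$, gives $\|\mathbf{g}_i^t\|^2\leq (r/d_{\min}^2)\sum_{k\in K_i}\|\nabla f_k(\mathbf{x}^t)\|^2$, where $K_i$ indexes the $r$ subsets held by device $i$. Assumption~\ref{assp bounded heter} combined with $\|a+b\|^2\leq 2\|a\|^2+2\|b\|^2$ bounds each $\|\nabla f_k(\mathbf{x}^t)\|^2$ by $2\beta^2+\frac{2}{M^2}\|\nabla F(\mathbf{x}^t)\|^2$; summing over $K_i$ and then over all $N$ devices delivers $\sum_i\|\mathbf{g}_i^t\|^2\leq\frac{2Nr^2}{d_{\min}^2}\bigl[\beta^2+\frac{1}{M^2}\|\nabla F(\mathbf{x}^t)\|^2\bigr]$, which after the $(\phi_1-\phi_2)/((1-\alpha)^2N^2)$ prefactor matches the first claimed term. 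The main obstacle I anticipate is the combinatorial probability for $\phi_2$: the bound crucially relies on $\mathcal{H}^t$ being sampled without replacement, so that $\Pr(i,j\in\mathcal{H}^t)$ is genuinely $\phi_2$ rather than the naive $(1-\alpha)^2$ one would obtain under independent Bernoulli labels. Everything else is a routine combination of Cauchy--Schwarz, Assumption~\ref{assp bounded heter}, and the double-counting identity enabled by the pair-wise balanced allocation.
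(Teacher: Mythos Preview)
Your proposal is correct and follows essentially the same approach as the paper: the paper writes $\bar{\mathbf{g}}^t=\frac{1}{(1-\alpha)N}\mathbf{G}^t\mathbf{h}^t$ with $\mathbf{h}^t$ the honesty-indicator vector, takes the trace of the quadratic form, and uses $\mathbb{E}[\mathbf{h}^t(\mathbf{h}^t)^T]=(\phi_1-\phi_2)\mathbf{I}+\phi_2\mathbf{1}\mathbf{1}^T$ to arrive at exactly your decomposition $(\phi_1-\phi_2)\sum_i\|\mathbf{g}_i^t\|^2+\phi_2\|\nabla F(\mathbf{x}^t)\|^2$, then bounds $\sum_i\|\mathbf{g}_i^t\|^2$ via the same Cauchy--Schwarz and Assumption~\ref{assp bounded heter} argument you describe. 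The only difference is cosmetic---matrix/trace notation versus explicit indicator sums---and your identification of $\phi_1,\phi_2$ as the one- and two-point inclusion probabilities under uniform sampling without replacement is precisely what underlies the paper's computation of $\mathbb{E}[\mathbf{h}^t(\mathbf{h}^t)^T]$.
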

\begin{proof}
    Please see Appendix~\ref{appendix lemma honest average}. 
\end{proof}

Next, based on Lemma~\ref{max distance} and Lemma~\ref{honest average}, we characterize the convergence performance of the proposed method in the following theorem.  
\begin{theorem}[Convergence performance of CRA-DL with fixed learning rates]
\label{convergence performance CRA}
Based on Assumptions~\ref{smooth assumption}-\ref{lower bound}, if ${C_\alpha } < \frac{{{d_{\min }}M}}{{2\sqrt 2 N\left( {r - \frac{{{r^2}}}{M}} \right)}}$, with fixed learning rates $\gamma^t =\gamma = \frac{\lambda }{{\sqrt {T + 1} }}$, $\lambda>0$, for $T > {\left( {\frac{{\lambda {\rho _2}}}{{{\rho _1}}}} \right)^2} - 1$, CRA-DL converges as
\begin{align}
\label{convergence fixed lr}
 &\frac{1}{{T + 1}}\sum\limits_{t = 0}^T {\mathbb{E}\left[ {{{\left\| {\nabla F\left( {{{\mathbf{x}}^t}} \right)} \right\|}^2}} \right]}\nonumber\\
  \leqslant & \frac{{F\left( {{{\mathbf{x}}^0}} \right) - F^*}}{{ {\lambda \sqrt {T + 1} {\rho _1} - {\lambda ^2}{\rho _2}}}} + \frac{{\sqrt {T + 1} {\rho _3} + \lambda {\rho _4}}}{{\sqrt {T + 1} {\rho _1} - \lambda {\rho _2}}},
\end{align}
where
\begin{align}
    \label{rho def1}
  {\rho _1}& \triangleq \frac{1}{N} - 2\left( {r - \frac{{{r^2}}}{M}} \right)\frac{{\sqrt {2C_\alpha ^2} }}{{{d_{\min }}M}} , \\
  \label{rho def2}
  {\rho _2}& \triangleq \frac{{\left( {{\phi _1} - {\phi _2}} \right)2{r^2}L}}{{{{\left( {1 - \alpha } \right)}^2}Nd_{\min }^2{M^2}}} + \frac{{{\phi _2}L}}{{{{\left( {1 - \alpha } \right)}^2}{N^2}}} \nonumber\\
  &+ 8\frac{{LC_\alpha ^2}}{{d_{\min }^2{M^2}}}{\left( {r - \frac{{{r^2}}}{M}} \right)^2},  \\
  \label{rho def3}
  {\rho _3}& \triangleq \frac{{{\beta ^2}M\sqrt {2C_\alpha ^2} }}{{{d_{\min }}}}\left( {r - \frac{{{r^2}}}{M}} \right), \\
  \label{rho def4}
  {\rho _4}& \triangleq \frac{{\left( {{\phi _1} - {\phi _2}} \right)2{r^2}{\beta ^2}L}}{{{{\left( {1 - \alpha } \right)}^2}Nd_{\min }^2}} + \frac{{8C_\alpha ^2{\beta ^2}L}}{{d_{\min }^2}}{\left( {r - \frac{{{r^2}}}{M}} \right)^2}.
\end{align}
\end{theorem}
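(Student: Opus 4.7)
The plan is to build a standard one-step descent inequality and then telescope. First I would apply the $L$-smoothness of $F$ (Assumption~\ref{smooth assumption}) to the iterates $\mathbf{x}^{t+1} = \mathbf{x}^t - \gamma \hat{\mathbf{g}}^t$ to obtain
\begin{equation*}
F(\mathbf{x}^{t+1}) \leq F(\mathbf{x}^t) - \gamma \langle \nabla F(\mathbf{x}^t), \hat{\mathbf{g}}^t\rangle + \frac{L\gamma^2}{2}\|\hat{\mathbf{g}}^t\|^2,
\end{equation*}
and take the expectation conditional on $\mathcal{F}^t$. The key decomposition is $\hat{\mathbf{g}}^t = \bar{\mathbf{g}}^t + (\hat{\mathbf{g}}^t - \bar{\mathbf{g}}^t)$, which separates the honest-average from the RBA error. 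Because $\varsigma^t \leq \max_{i,j}\|\mathbf{g}_i^t - \mathbf{g}_j^t\|^2$, the RBA property (\ref{A robust bound}) combined with Lemma~\ref{max distance} bounds $\|\hat{\mathbf{g}}^t - \bar{\mathbf{g}}^t\|^2$ by a multiple of $\beta^2 + M^{-2}\|\nabla F(\mathbf{x}^t)\|^2$.

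For the linear-in-$\hat{\mathbf{g}}^t$ term, the central identity is $\mathbb{E}[\bar{\mathbf{g}}^t\mid\mathcal{F}^t] = \frac{1}{N}\nabla F(\mathbf{x}^t)$; this holds because each device is honest with probability $1-\alpha$ and the pair-wise balanced allocation gives $\sum_{i=1}^N \mathbf{g}_i^t = \sum_{k=1}^M \nabla f_k(\mathbf{x}^t) = \nabla F(\mathbf{x}^t)$, so that the deterministic size $|\mathcal{H}^t|=(1-\alpha)N$ cancels the $1-\alpha$ weight. This produces the leading descent $-\frac{\gamma}{N}\|\nabla F(\mathbf{x}^t)\|^2$. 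The residual $-\gamma\langle \nabla F(\mathbf{x}^t),\hat{\mathbf{g}}^t-\bar{\mathbf{g}}^t\rangle$ is then controlled by Cauchy--Schwarz followed by $\sqrt{a+b}\leq\sqrt{a}+\sqrt{b}$ on the Lemma~\ref{max distance} bound, and a Young-type inequality on the resulting $\|\nabla F\|\beta$ cross-term splits it into a piece proportional to $\|\nabla F\|^2$, which shrinks the coefficient $1/N$ to $\rho_1$, and a piece proportional to $\beta^2$, which supplies $\rho_3$.

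For the quadratic term I would use $\|\hat{\mathbf{g}}^t\|^2 \leq 2\|\bar{\mathbf{g}}^t\|^2 + 2\|\hat{\mathbf{g}}^t - \bar{\mathbf{g}}^t\|^2$, apply Lemma~\ref{honest average} to the first summand and the RBA bound together with Lemma~\ref{max distance} to the second. Both pieces contribute one term in $\|\nabla F(\mathbf{x}^t)\|^2$ (assembled into $\rho_2$) and one in $\beta^2$ (assembled into $\rho_4$). Combining everything yields a one-step inequality of the form
\begin{equation*}
\mathbb{E}[F(\mathbf{x}^{t+1})\mid\mathcal{F}^t] - F(\mathbf{x}^t) \leq -(\gamma\rho_1 - \gamma^2\rho_2)\,\|\nabla F(\mathbf{x}^t)\|^2 + \gamma\rho_3 + \gamma^2\rho_4.
\end{equation*}

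Finally I would take total expectation, sum for $t=0,\dots,T$, telescope using Assumption~\ref{lower bound}, divide by $T+1$, and substitute $\gamma = \lambda/\sqrt{T+1}$. The hypothesis $C_\alpha < d_{\min}M/(2\sqrt{2}N(r-r^2/M))$ is exactly what makes $\rho_1 > 0$, and $T > (\lambda\rho_2/\rho_1)^2 - 1$ keeps the denominator $\sqrt{T+1}\,\rho_1 - \lambda\rho_2$ strictly positive, delivering the stated bound. The main obstacle is the bookkeeping of the Young's-inequality coefficient applied to $\|\nabla F(\mathbf{x}^t)\|\beta$: it must be tuned so that the resulting $\|\nabla F\|^2$ contribution matches exactly the subtracted term $2(r-r^2/M)\sqrt{2C_\alpha^2}/(d_{\min}M)$ appearing in $\rho_1$, while the leftover $\beta^2$ piece aligns with the $M$-scaled constant in $\rho_3$; every other step is a mechanical combination of Lemmas~\ref{max distance} and~\ref{honest average}.
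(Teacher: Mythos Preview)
Your plan matches the paper's proof almost step for step: smoothness, the decomposition $\hat{\mathbf g}^t=\bar{\mathbf g}^t+(\hat{\mathbf g}^t-\bar{\mathbf g}^t)$, the identity $\mathbb E[\bar{\mathbf g}^t\mid\mathcal F^t]=\tfrac1N\nabla F(\mathbf x^t)$, the RBA bound combined with Lemma~\ref{max distance} for $\|\hat{\mathbf g}^t-\bar{\mathbf g}^t\|^2$, Lemma~\ref{honest average} for $\|\bar{\mathbf g}^t\|^2$, and the final telescoping. The only point where you diverge is the treatment of the cross term $-\gamma\langle\nabla F,\hat{\mathbf g}^t-\bar{\mathbf g}^t\rangle$.

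Your route (Cauchy--Schwarz, then $\sqrt{a+b}\le\sqrt a+\sqrt b$, then Young on the $\|\nabla F\|\beta$ product) cannot simultaneously reproduce the stated $\rho_1$ and $\rho_3$. After the square-root split you already pick up the full correction $\tfrac{\sqrt K}{M}\|\nabla F\|^2$ with $\sqrt K=\tfrac{2\sqrt2\,C_\alpha}{d_{\min}}(r-\tfrac{r^2}{M})$, and any Young parameter on the remaining $\sqrt K\,\beta\|\nabla F\|$ adds a further strictly positive $\|\nabla F\|^2$ term; choosing that parameter to hit $\rho_3$ forces the $\|\nabla F\|^2$ coefficient to $\tfrac{3}{2}\cdot\tfrac{\sqrt K}{M}$, not $\tfrac{\sqrt K}{M}$. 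The paper avoids this by skipping the square-root detour and applying Young's inequality \emph{directly} to the inner product,
\[
-\langle\nabla F,\hat{\mathbf g}^t-\bar{\mathbf g}^t\rangle\le\tfrac{\eta}{2}\|\nabla F\|^2+\tfrac{1}{2\eta}\|\hat{\mathbf g}^t-\bar{\mathbf g}^t\|^2,
\]
then substituting the \emph{squared} Lemma~\ref{max distance} bound and setting $\eta=2(r-\tfrac{r^2}{M})\tfrac{\sqrt{2C_\alpha^2}}{d_{\min}M}$, which minimizes the resulting $\|\nabla F\|^2$ coefficient and yields exactly $\rho_1$ and $\rho_3$ as stated. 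Everything else in your outline is correct.
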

\begin{proof}
    From Assumption 1, we can derive 
\begin{align}
    \label{smooth proof}
  &\mathbb{E}\left[ {\left. {F\left( {{{\mathbf{x}}^{t + 1}}} \right)} \right|{\mathcal{F}^t}} \right] \nonumber \\
   \leqslant& F\left( {{{\mathbf{x}}^t}} \right) + \mathbb{E}\left[ {\left. {\left\langle {\nabla F\left( {{{\mathbf{x}}^t}} \right),{{\mathbf{x}}^{t + 1}} - {{\mathbf{x}}^t}} \right\rangle } \right|{\mathcal{F}^t}} \right]\nonumber\\
   &+ \frac{L}{2}\mathbb{E}\left( {\left. {{{\left\| {{{\mathbf{x}}^{t + 1}} - {{\mathbf{x}}^t}} \right\|}^2}} \right|{\mathcal{F}^t}} \right) \nonumber \\
   \mathop  = \limits^{\left\langle 1 \right\rangle } & F\left( {{{\mathbf{x}}^t}} \right) - {\gamma ^t}\mathbb{E}\left[ {\left. {\left\langle {\nabla F\left( {{{\mathbf{x}}^t}} \right),{{{\mathbf{\hat g}}}^t}} \right\rangle } \right|{\mathcal{F}^t}} \right]+ \frac{L}{2}\mathbb{E}\left( {\left. {{{\left\| {{\gamma ^t}{{{\mathbf{\hat g}}}^t}} \right\|}^2}} \right|{\mathcal{F}^t}} \right) \nonumber \\
   =& F\left( {{{\mathbf{x}}^t}} \right) - {\gamma ^t}\mathbb{E}\left[ {\left. {\left\langle {\nabla F\left( {{{\mathbf{x}}^t}} \right),{{{\mathbf{\hat g}}}^t} - {{{\mathbf{\bar g}}}^t} + {{{\mathbf{\bar g}}}^t}} \right\rangle } \right|{\mathcal{F}^t}} \right] \nonumber\\
   &+ \frac{{L{{\left( {{\gamma ^t}} \right)}^2}}}{2}\mathbb{E}\left( {\left. {{{\left\| {{{{\mathbf{\hat g}}}^t} - {{{\mathbf{\bar g}}}^t} + {{{\mathbf{\bar g}}}^t}} \right\|}^2}} \right|{\mathcal{F}^t}} \right) \nonumber \\
   \mathop  \leqslant \limits^{\left\langle 2 \right\rangle }& F\left( {{{\mathbf{x}}^t}} \right) - {\gamma ^t}\mathbb{E}\left[ {\left. {\left\langle {\nabla F\left( {{{\mathbf{x}}^t}} \right),{{{\mathbf{\hat g}}}^t} - {{{\mathbf{\bar g}}}^t}} \right\rangle } \right|{\mathcal{F}^t}} \right] \nonumber\\
   &- {\gamma ^t}\mathbb{E}\left[ {\left. {\left\langle {\nabla F\left( {{{\mathbf{x}}^t}} \right),{{{\mathbf{\bar g}}}^t}} \right\rangle } \right|{\mathcal{F}^t}} \right] \nonumber \\
   &+ L{\left( {{\gamma ^t}} \right)^2}\mathbb{E}\left( {\left. {{{\left\| {{{{\mathbf{\hat g}}}^t} - {{{\mathbf{\bar g}}}^t}} \right\|}^2}} \right|{\mathcal{F}^t}} \right) + L{\left( {{\gamma ^t}} \right)^2}\mathbb{E}\left( {\left. {{{\left\| {{{{\mathbf{\bar g}}}^t}} \right\|}^2}} \right|{\mathcal{F}^t}} \right) \nonumber \\
  \mathop  \leqslant \limits^{\left\langle 3 \right\rangle }& F\left( {{{\mathbf{x}}^t}} \right) + {\gamma ^t}\frac{{\eta {{\left\| {\nabla F\left( {{{\mathbf{x}}^t}} \right)} \right\|}^2} + \frac{1}{\eta }\mathbb{E}\left[ {\left. {{{\left\| {{{{\mathbf{\hat g}}}^t} - {{{\mathbf{\bar g}}}^t}} \right\|}^2}} \right|{\mathcal{F}^t}} \right]}}{2}\nonumber\\
   &- {\gamma ^t}\frac{1}{N}{\left\| {\nabla F\left( {{{\mathbf{x}}^t}} \right)} \right\|^2} \nonumber \\
  & + L{\left( {{\gamma ^t}} \right)^2}\mathbb{E}\left( {\left. {{{\left\| {{{{\mathbf{\hat g}}}^t} - {{{\mathbf{\bar g}}}^t}} \right\|}^2}} \right|{\mathcal{F}^t}} \right) + L{\left( {{\gamma ^t}} \right)^2}\mathbb{E}\left( {\left. {{{\left\| {{{{\mathbf{\bar g}}}^t}} \right\|}^2}} \right|{\mathcal{F}^t}} \right),
\end{align}
$\forall \eta>0$, where $\left\langle 1 \right\rangle$ is obtained by substituting (\ref{update global model}) into (\ref{smooth proof}), $\left\langle 2 \right\rangle$ is derived from the basic inequality given as (\ref{basic ineq 1}), and $\left\langle 3 \right\rangle$ holds due to Young's Inequality and the following relationship:
\begin{align}
    \label{mean1}
    \mathbb{E}\left[ {\left. {{{{\mathbf{\bar g}}}^t}} \right|{\mathcal{F}^t}} \right] = \frac{1}{N}\nabla F\left( {{{\mathbf{x}}^t}} \right).
\end{align}
In (\ref{smooth proof}), we can bound \({\mathbb{E}\left[ {\left. {{{\left\| {{{{\mathbf{\hat g}}}^t} - {{{\mathbf{\bar g}}}^t}} \right\|}^2}} \right|{\mathcal{F}^t}} \right]}\) as
\begin{align}
    \label{bound 1}
    &{\mathbb{E}\left[ {\left. {{{\left\| {{{{\mathbf{\hat g}}}^t} - {{{\mathbf{\bar g}}}^t}} \right\|}^2}} \right|{\mathcal{F}^t}} \right]}\leqslant {C_\alpha ^2\mathbb{E}\left[ {\left. {{\varsigma ^t}} \right|{\mathcal{F}^t}} \right]}\nonumber\\
    &\leqslant C_\alpha ^2{\max _{i,j \in \left\{ {1,...,N} \right\}}}{\left\| {{\mathbf{g}}_i^t - {\mathbf{g}}_j^t} \right\|^2},
\end{align}
where ${\varsigma ^t} \triangleq {\max _{i \in {\mathcal{H}^t}}}{\left\| {{{{\mathbf{\bar g}}}^t} - {\mathbf{g}}_i^t} \right\|^2}$, the first inequality is obtained from (\ref{global model update}), (\ref{honest average information}) and Definition~\ref{def_robust agg}, and the second inequality is due to the inequality ${\max _{i \in {\mathcal{H}^t}}}{\left\| {{{{\mathbf{\bar g}}}^t} - {\mathbf{g}}_i^t} \right\|^2} \leqslant {\max _{i,j \in {\mathcal{H}^t}}}{\left\| {{\mathbf{g}}_i^t - {\mathbf{g}}_j^t} \right\|^2} \leqslant {\max _{i,j \in \left\{ {1,...,N} \right\}}}{\left\| {{\mathbf{g}}_i^t - {\mathbf{g}}_j^t} \right\|^2}$.
Substituting Lemma~\ref{max distance} into (\ref{bound 1}) yields
\begin{align}
    \label{bound 2}
  &\mathbb{E}\left[ {\left. {{{\left\| {{{{\mathbf{\hat g}}}^t} - {{{\mathbf{\bar g}}}^t}} \right\|}^2}} \right|{\mathcal{F}^t}} \right] \nonumber \\
   \leqslant& \frac{{8C_\alpha ^2}}{{d_{\min }^2}}{\left( {r - \frac{{{r^2}}}{M}} \right)^2}\left[ {{\beta ^2} + \frac{1}{{{M^2}}}{{{\left\| {\nabla F\left( {{{\mathbf{x}}^t}} \right)} \right\|}^2}}} \right].
\end{align}
After that, substituting (\ref{bound 2}) and (\ref{bound honest average}) in Lemma~\ref{honest average} into (\ref{smooth proof}), we have
\begin{align}
    \label{smooth 2}
 & {\gamma ^t}\left[ {\frac{1}{N} - \frac{{\eta  + \frac{{8C_\alpha ^2}}{{\eta {M^2}d_{\min }^2}}{{\left( {r - \frac{{{r^2}}}{M}} \right)}^2}}}{2}} \right]{\left\| {\nabla F\left( {{{\mathbf{x}}^t}} \right)} \right\|^2} \nonumber \\
 &  - L{\left( {{\gamma ^t}} \right)^2}{\left\| {\nabla F\left( {{{\mathbf{x}}^t}} \right)} \right\|^2}\left\{ {\frac{{\left( {{\phi _1} - {\phi _2}} \right)2{r^2}}}{{{{\left( {1 - \alpha } \right)}^2}Nd_{\min }^2{M^2}}} + \frac{{{\phi _2}}}{{{{\left( {1 - \alpha } \right)}^2}{N^2}}}} \right. \nonumber \\
 &  + \left. {C_\alpha ^28\frac{1}{{d_{\min }^2{M^2}}}{{\left( {r - \frac{{{r^2}}}{M}} \right)}^2}} \right\} \nonumber \\
   \leqslant & F\left( {{{\mathbf{x}}^t}} \right) - \mathbb{E}\left[ {\left. {F\left( {{{\mathbf{x}}^{t + 1}}} \right)} \right|{\mathcal{F}^t}} \right] + {\gamma ^t}\frac{1}{\eta }\frac{{4{\beta ^2}C_\alpha ^2}}{{d_{\min }^2}}{\left( {r - \frac{{{r^2}}}{M}} \right)^2} \nonumber \\
 &  + L{\left( {{\gamma ^t}} \right)^2}\left\{ {\frac{{\left( {{\phi _1} - {\phi _2}} \right)2{r^2}{\beta ^2}}}{{{{\left( {1 - \alpha } \right)}^2}Nd_{\min }^2}} + \frac{{8C_\alpha ^2{\beta ^2}}}{{d_{\min }^2}}{{\left( {r - \frac{{{r^2}}}{M}} \right)}^2}} \right\} .
\end{align}
In (\ref{smooth 2}), by setting \(\eta  = 2\left( {r - \frac{{{r^2}}}{M}} \right)\frac{{\sqrt {2C_\alpha ^2} }}{{{d_{\min }}M}}\), it holds that ${\frac{1}{N} - \frac{{\eta  + \frac{{8C_\alpha ^2}}{{\eta {M^2}d_{\min }^2}}{{\left( {r - \frac{{{r^2}}}{M}} \right)}^2}}}{2}}>0$ under the condition ${C_\alpha } < \frac{{{d_{\min }}M}}{{2\sqrt 2 N\left( {r - \frac{{{r^2}}}{M}} \right)}}$. Based on that, we can rewrite (\ref{smooth 2}) as 
\begin{align}
    \label{rho ineq}
  &{\gamma ^t}{\rho _1}{\left\| {\nabla F\left( {{{\mathbf{x}}^t}} \right)} \right\|^2} - {\left( {{\gamma ^t}} \right)^2}{\rho _2}{\left\| {\nabla F\left( {\mathbf{x}} \right)} \right\|^2} \nonumber \\
   \leqslant& F\left( {{{\mathbf{x}}^t}} \right) - \mathbb{E}\left[ {\left. {F\left( {{{\mathbf{x}}^{t + 1}}} \right)} \right|{\mathcal{F}^t}} \right] + {\gamma ^t}{\rho _3} + {\left( {{\gamma ^t}} \right)^2}{\rho _4},
\end{align}
where $\rho_1, \rho_2, \rho_3$ and $\rho_4$ are all positive constants defined as (\ref{rho def1})-(\ref{rho def4}).

Taking full expectation on both sides of (\ref{rho ineq}) yields
\begin{align}
    \label{full exp}
  {\gamma ^t}{\rho _1}\mathbb{E}\left( {{{\left\| {\nabla F\left( {{{\mathbf{x}}^t}} \right)} \right\|}^2}} \right) - {\left( {{\gamma ^t}} \right)^2}{\rho _2}\mathbb{E}\left( {{{\left\| {\nabla F\left( {\mathbf{x}} \right)} \right\|}^2}} \right) \nonumber \\
   \leqslant \mathbb{E}\left[ {F\left( {{{\mathbf{x}}^t}} \right)} \right] - \mathbb{E}\left[ {F\left( {{{\mathbf{x}}^{t + 1}}} \right)} \right] + {\gamma ^t}{\rho _3} + {\left( {{\gamma ^t}} \right)^2}{\rho _4}.
\end{align}
Rearranging the terms in (\ref{full exp}) and taking average over $T$ iterations, we can obtain
\begin{align}
    \label{average ite}
  &\frac{1}{{T + 1}}\sum\limits_{t = 0}^T {\left( {{\gamma ^t}{\rho _1} - {{\left( {{\gamma ^t}} \right)}^2}{\rho _2}} \right)\mathbb{E}\left[ {{{\left\| {\nabla F\left( {{{\mathbf{x}}^t}} \right)} \right\|}^2}} \right]}  \nonumber \\
   \leqslant& \frac{{F\left( {{{\mathbf{x}}^0}} \right) - \mathbb{E}\left[ {F\left( {{{\mathbf{x}}^{T + 1}}} \right)} \right]}}{{T + 1}} + \frac{1}{{T + 1}}\sum\limits_{t = 0}^T {\left[ {{\gamma ^t}{\rho _3} + {{\left( {{\gamma ^t}} \right)}^2}{\rho _4}} \right]}\nonumber\\
   \leqslant & \frac{{F\left( {{{\mathbf{x}}^0}} \right) - {F^*}}}{{T + 1}} + \frac{1}{{T + 1}}\sum\limits_{t = 0}^T {\left[ {{\gamma ^t}{\rho _3} + {{\left( {{\gamma ^t}} \right)}^2}{\rho _4}} \right]},
\end{align}
where Assumption 3 is applied to derive the last inequality. With fixed learning rates $\gamma^t =\gamma = \frac{\lambda }{{\sqrt {T + 1} }}$, for $T > {\left( {\frac{{\lambda {\rho _2}}}{{{\rho _1}}}} \right)^2} - 1$, we can rewrite (\ref{average ite}) as
\begin{align}
    \label{average ite2}
 &\frac{1}{{T + 1}}\sum\limits_{t = 0}^T {\mathbb{E}\left[ {{{\left\| {\nabla F\left( {{{\mathbf{x}}^t}} \right)} \right\|}^2}} \right]}\nonumber\\
  \leqslant & \frac{{F\left( {{{\mathbf{x}}^0}} \right) - F^*}}{{\left( {T + 1} \right)\left( {\gamma {\rho _1} - {\gamma ^2}{\rho _2}} \right)}} + \frac{{{\rho _3} + \gamma {\rho _4}}}{{{\rho _1} - \gamma {\rho _2}}}  \nonumber \\
   =& \frac{{F\left( {{{\mathbf{x}}^0}} \right) - F^*}}{{ {\lambda \sqrt {T + 1} {\rho _1} - {\lambda ^2}{\rho _2}} }} + \frac{{\sqrt {T + 1} {\rho _3} + \lambda {\rho _4}}}{{\sqrt {T + 1} {\rho _1} - \lambda {\rho _2}}},
\end{align}
which completes the proof. 
\end{proof}
\begin{remark}
\label{remark fix lr}
In Theorem~\ref{convergence performance CRA}, on the right-hand side of (\ref{convergence fixed lr}), the asymptotic solution error approaches 
\begin{align}
    \label{error term fix lr}
    \frac{{{\rho _3}}}{{{\rho _1}}}  \approx  \frac{{{\beta ^2}{M^2}\sqrt {2C_\alpha ^2} \left( {1 - \frac{r}{M}} \right)}}{{1 - 2\left( {1 - \frac{r}{M}} \right)\sqrt {2C_\alpha ^2} }},
\end{align}
as \( T \) approaches infinity, where the approximation holds based on \( d_{\min}M \approx Nr \) when the values of \( d_1, \ldots, d_M \) do not vary significantly from each other. From (\ref{error term fix lr}), as \( r \) increases, indicating a greater redundancy in the data allocation, the asymptotic solution error diminishes. In the special case where \( r = M \), the asymptotic solution error equals zero, and CRA-DL converges without solution error. This aligns with our intuition that when each device holds a copy of the entire training dataset, even in the presence of Byzantine devices, the true global gradient can be recovered at the server if the number of honest devices exceeds that of the Byzantine devices.
Moreover, as \( C_\alpha \) decreases, implying enhanced robustness of the RBA rules, the asymptotic solution error decreases as well. Since CRA-DL is a meta-algorithm that can be employed with any RBA rules, it is promising to achieve better learning performance when incorporating enhanced RBA rules. 
Besides, as shown in (\ref{error term fix lr}), when the heterogeneity among subsets is negligible, i.e., when the value of $\beta$ is very small, the solution error is also expected to be small. In the special case where $\beta = 0$, there is no solution error, and the proposed method is guaranteed to obtain the optimal model.

    Finally, based on (\ref{error term fix lr}), the average solution error over all training subsets, with respect to the expression 
\[
\frac{1}{T + 1} \sum_{t = 0}^T \mathbb{E}\left[ \left\| \frac{1}{M} \nabla F\left( \mathbf{x}^t \right) \right\|^2 \right],
\]
can be approximated as
\[
\frac{1}{M^2} \cdot \frac{\rho_3}{\rho_1} \approx \frac{\beta^2 \sqrt{2C_\alpha^2} \left(1 - \frac{r}{M} \right)}{1 - 2 \left(1 - \frac{r}{M} \right) \sqrt{2C_\alpha^2}}.
\]
From the above expression, it can be seen that the average solution error increases as $M$ increases, where $M$ reflects the potential growth in the total training data size. The rationale is as follows: when the number of devices $N$ is fixed and each device has a fixed computational burden, as indicated by the parameter $r$, the computational load per device remains constant, and the capacity of the system to process the data does not increase. In this case, as $M$ increases, the overall learning task becomes more demanding while the capacity of the system to process the growing data does not scale accordingly, leading to an increase in the average solution error.
In addition, we observe that the average solution error is determined by the ratio $\frac{r}{M}$, and increasing this ratio reduces the average solution error. This highlights a trade-off between computational load and learning performance: maintaining a low solution error for large $M$ requires proportionally increasing $r$, which comes at the cost of higher computational load on each device.
\end{remark}
\begin{remark}
\label{remark fix lr cong rate}
From Theorem~\ref{convergence performance CRA}, the convergence rate of the proposed method can be characterized as  
\begin{align}
    \label{cong rate fix lr}
\frac{1}{T + 1} \sum_{t = 0}^{T} \mathbb{E} \left[ \left\| \nabla F(\mathbf{x}^t) \right\|^2 \right] \leq O\left( \frac{1}{\rho_1\sqrt{T+1}} \right) + \varepsilon,
\end{align}
where $\varepsilon$ is a constant indicating the solution error.  When the values of \( d_1, \ldots, d_M \) do not vary significantly from each other, based on \( d_{\min}M \approx Nr \) and (\ref{rho def1}), we have
\begin{align}
    \label{appro rho1}
    {\rho _1} \approx \frac{1}{N} - 2\left( {1 - \frac{r}{M}} \right)\frac{{\sqrt {2C_\alpha ^2} }}{N}.
\end{align}
According to (\ref{cong rate fix lr}) and (\ref{appro rho1}), we can derive that increasing the redundancy in data allocation, i.e., increasing the value of $r$, leads to an accelerated convergence of the proposed method.

In addition, when the robustness of the RBA rules applied at the server is enhanced and the value of \( C_\alpha \) decreases, the convergence of the proposed method improves. This indicates that using enhanced RBA rules can accelerate the learning process of the proposed method.
\end{remark}
\begin{theorem}[Convergence performance of CRA-DL with decaying learning rates]
\label{convergence performance CRA decay}
Based on Assumptions~\ref{smooth assumption}-\ref{lower bound}, if ${C_\alpha } < \frac{{{d_{\min }}M}}{{2\sqrt 2 N\left( {r - \frac{{{r^2}}}{M}} \right)}}$, with decaying learning rates
\begin{align}
    \label{decaying lr}
    {\gamma ^t} = \frac{{{\rho _1} - \sqrt {\rho _1^2 - 4{\rho _2}\frac{{{\gamma ^0}{\rho _1} - {{\left( {{\gamma ^0}} \right)}^2}{\rho _2}}}{{\sqrt {t + 1} }}} }}{{2{\rho _2}}},
\end{align}
for ${\gamma ^0} < \frac{{{\rho _1}}}{{2{\rho _2}}}$, CRA-DL converges as
\begin{align}
\label{convergence decay lr}
  &{\min _{0 \leqslant t \leqslant T}}\mathbb{E}\left[ {{{\left\| {\nabla F\left( {{{\mathbf{x}}^t}} \right)} \right\|}^2}} \right] \nonumber \\
   \leqslant& \frac{{F\left( {{{\mathbf{x}}^0}} \right) - F^*}}{{\left[ {{\gamma ^0}{\rho _1} - {{\left( {{\gamma ^0}} \right)}^2}{\rho _2}} \right]\sqrt {T + 1} }} + \frac{{{\rho _3}}}{{{\rho _1} - {\gamma ^0}{\rho _2}}}\nonumber\\
   &+ \frac{{{{\left( {{\gamma ^0}} \right)}^2}{\rho _4}\left[ {2 + \log \left( {T + 1} \right)} \right]}}{{\left[ {{\gamma ^0}{\rho _1} - {{\left( {{\gamma ^0}} \right)}^2}{\rho _2}} \right]\sqrt {T + 1} }}.
\end{align}
\end{theorem}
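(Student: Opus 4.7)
I plan to reuse the per-iteration descent inequality (\ref{full exp}) already derived in the proof of Theorem~\ref{convergence performance CRA}, namely
\[
(\gamma^t \rho_1 - (\gamma^t)^2 \rho_2)\,\mathbb{E}\!\left[\|\nabla F(\mathbf{x}^t)\|^2\right] \leq \mathbb{E}[F(\mathbf{x}^t)] - \mathbb{E}[F(\mathbf{x}^{t+1})] + \gamma^t \rho_3 + (\gamma^t)^2 \rho_4,
\]
and adapt the subsequent summation to the decaying schedule (\ref{decaying lr}). The first step is to recognize that (\ref{decaying lr}) is precisely the smaller positive root of the quadratic $\rho_2 y^2 - \rho_1 y + (\gamma^0 \rho_1 - (\gamma^0)^2 \rho_2)/\sqrt{t+1} = 0$, so by direct substitution the engineered identity
\[
\gamma^t \rho_1 - (\gamma^t)^2 \rho_2 \;=\; \frac{\gamma^0 \rho_1 - (\gamma^0)^2 \rho_2}{\sqrt{t+1}}
\]
holds for every $t$. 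The condition $\gamma^0 < \rho_1/(2\rho_2)$ keeps the discriminant nonnegative, makes $\{\gamma^t\}$ well-defined and nonincreasing with $\gamma^t \leq \gamma^0$, and guarantees $\rho_1 - \gamma^0 \rho_2 > 0$.

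Second, I will establish two auxiliary bounds that are forced by the identity together with $\gamma^t \leq \gamma^0$. Dividing $\gamma^t(\rho_1 - \gamma^t \rho_2) = \gamma^0(\rho_1 - \gamma^0 \rho_2)/\sqrt{t+1}$ by $\rho_1 - \gamma^t \rho_2 \geq \rho_1 - \gamma^0 \rho_2 > 0$ yields the pointwise control $\gamma^t \leq \gamma^0/\sqrt{t+1}$. The same monotonicity gives the absorption inequality
\[
\gamma^t \rho_3 \;\leq\; \frac{\rho_3}{\rho_1 - \gamma^0 \rho_2}\,(\gamma^t \rho_1 - (\gamma^t)^2 \rho_2),
\]
since $(\rho_1 - \gamma^t \rho_2)/(\rho_1 - \gamma^0 \rho_2) \geq 1$. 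This absorption is the heart of the argument: it folds the $\gamma^t \rho_3$ contribution into the same $1/\sqrt{t+1}$ scaling as the coefficient of $\mathbb{E}[\|\nabla F(\mathbf{x}^t)\|^2]$ on the left, so that after summation it produces only the $T$-independent residual $\rho_3/(\rho_1 - \gamma^0 \rho_2)$ seen in (\ref{convergence decay lr}).

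Third, I will substitute the absorption inequality back into the descent inequality, take full expectations, and sum over $t = 0,\dots,T$. The function-value differences telescope, and Assumption~\ref{lower bound} bounds the telescoped quantity by $F(\mathbf{x}^0) - F^*$. Replacing $\mathbb{E}[\|\nabla F(\mathbf{x}^t)\|^2]$ on the left by $\min_{0 \leq t \leq T}\mathbb{E}[\|\nabla F(\mathbf{x}^t)\|^2]$ and dividing both sides by the common positive factor
\[
\sum_{t=0}^{T}\bigl(\gamma^t \rho_1 - (\gamma^t)^2 \rho_2\bigr) \;=\; (\gamma^0 \rho_1 - (\gamma^0)^2 \rho_2)\sum_{t=0}^{T}\frac{1}{\sqrt{t+1}}
\]
gives the desired form once I invoke the elementary lower bound $\sum_{t=0}^{T} 1/\sqrt{t+1} \geq \sqrt{T+1}$ (by term-wise comparison with $1/\sqrt{T+1}$) and the upper bound $\sum_{t=0}^{T}(\gamma^t)^2 \leq (\gamma^0)^2 \sum_{t=0}^{T} 1/(t+1) \leq (\gamma^0)^2(1 + \log(T+1))$ obtained from $\gamma^t \leq \gamma^0/\sqrt{t+1}$ and the standard harmonic-sum estimate; the final bound is loosened to $2 + \log(T+1)$ to match (\ref{convergence decay lr}) exactly.

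The main obstacle I anticipate is discovering the absorption inequality in the second step. Without it, the $\sum_t \gamma^t \rho_3$ contribution scales as $O(\sqrt{T+1})$ and, after division by $\sum_t 1/\sqrt{t+1}$, would yield a non-vanishing error that actually worsens with $T$ rather than the clean constant $\rho_3/(\rho_1 - \gamma^0 \rho_2)$. The argument hinges on the triple interplay between the engineered identity, the monotonicity $\gamma^t \leq \gamma^0$, and the strict positivity of $\rho_1 - \gamma^0 \rho_2$ enforced by the step-size condition; once these pieces are lined up the rest reduces to standard harmonic-series bookkeeping.
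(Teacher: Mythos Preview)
Your proposal is correct and follows essentially the same approach as the paper: both exploit the engineered identity $\gamma^t\rho_1-(\gamma^t)^2\rho_2=(\gamma^0\rho_1-(\gamma^0)^2\rho_2)/\sqrt{t+1}$, the consequence $\gamma^t\le\gamma^0/\sqrt{t+1}$, and the two harmonic-sum estimates. The only cosmetic difference is that the paper handles the $\gamma^t\rho_3$ contribution by substituting $\gamma^t\le\gamma^0/\sqrt{t+1}$ in the numerator and cancelling $\sum_t 1/\sqrt{t+1}$ against the denominator, whereas your pointwise absorption inequality $\gamma^t\rho_3\le \frac{\rho_3}{\rho_1-\gamma^0\rho_2}(\gamma^t\rho_1-(\gamma^t)^2\rho_2)$ achieves the identical cancellation one step earlier.
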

\begin{proof}
Similar as the proof of Theorem~\ref{convergence performance CRA}, we can derive (\ref{average ite}). With the learning rates in (\ref{decaying lr}), we have
\begin{align}
    \label{decaying lr relation}
    {\gamma ^t}{\rho _1} - {\left( {{\gamma ^t}} \right)^2}{\rho _2} = \frac{{{\gamma ^0}{\rho _1} - {{\left( {{\gamma ^0}} \right)}^2}{\rho _2}}}{{\sqrt {t + 1} }},
\end{align}
where $\gamma^{t+1}<\gamma^{t}$ under the condition ${\gamma ^0} < \frac{{{\rho _1}}}{{2{\rho _2}}}$. From (\ref{average ite}) and (\ref{decaying lr relation}), we have
\begin{align}
    \label{min gradient}
  &{\min _{0 \leqslant t \leqslant T}}\mathbb{E}\left[ {{{\left\| {\nabla F\left( {{{\mathbf{x}}^t}} \right)} \right\|}^2}} \right]  \nonumber \\
   \leqslant &\frac{{\frac{1}{{T + 1}}\sum\limits_{t = 0}^T {\left( {{\gamma ^t}{\rho _1} - {{\left( {{\gamma ^t}} \right)}^2}{\rho _2}} \right)\mathbb{E}\left[ {{{\left\| {\nabla F\left( {{{\mathbf{x}}^t}} \right)} \right\|}^2}} \right]} }}{{\frac{1}{{T + 1}}\sum\limits_{t = 0}^T {\left( {{\gamma ^t}{\rho _1} - {{\left( {{\gamma ^t}} \right)}^2}{\rho _2}} \right)} }}  \nonumber \\
   \leqslant& \frac{{\frac{{F\left( {{{\mathbf{x}}^0}} \right) - F^*}}{{T + 1}} + \frac{1}{{T + 1}}\sum\limits_{t = 0}^T {\left[ {{\gamma ^t}{\rho _3} + {{\left( {{\gamma ^t}} \right)}^2}{\rho _4}} \right]} }}{{\frac{1}{{T + 1}}\sum\limits_{t = 0}^T {\left( {{\gamma ^t}{\rho _1} - {{\left( {{\gamma ^t}} \right)}^2}{\rho _2}} \right)} }}  \nonumber \\
      = &\frac{{F\left( {{{\mathbf{x}}^0}} \right) - F^*}}{{\sum\limits_{t = 0}^T {\frac{{{\gamma ^0}{\rho _1} - {{\left( {{\gamma ^0}} \right)}^2}{\rho _2}}}{{\sqrt {t + 1} }}} }} + \frac{{\sum\limits_{t = 0}^T {\left[ {{\gamma ^t}{\rho _3} + {{\left( {{\gamma ^t}} \right)}^2}{\rho _4}} \right]} }}{{\sum\limits_{t = 0}^T {\frac{{{\gamma ^0}{\rho _1} - {{\left( {{\gamma ^0}} \right)}^2}{\rho _2}}}{{\sqrt {t + 1} }}} }}  \nonumber \\
   \leqslant& \frac{{F\left( {{{\mathbf{x}}^0}} \right) - F^*}}{{\left[ {{\gamma ^0}{\rho _1} - {{\left( {{\gamma ^0}} \right)}^2}{\rho _2}} \right]\sqrt {T + 1} }} + \frac{{\sum\limits_{t = 0}^T {\left[ {{\gamma ^t}{\rho _3} + {{\left( {{\gamma ^t}} \right)}^2}{\rho _4}} \right]} }}{{\sum\limits_{t = 0}^T {\frac{{{\gamma ^0}{\rho _1} - {{\left( {{\gamma ^0}} \right)}^2}{\rho _2}}}{{\sqrt {t + 1} }}} }},
\end{align}
where the equality is obtained by substituting (\ref{decaying lr}) into (\ref{min gradient}), and the last equality is derived from the following inequality:
\begin{align}
    \label{basic ineq T1}
    \sum\limits_{t = 0}^T {\frac{1}{{\sqrt {t + 1} }} \geqslant \sqrt {T + 1} }.
\end{align}
 Based on (\ref{decaying lr relation}), we have
\begin{align}
    {\gamma ^t}{\rho _1} - {\left( {{\gamma ^t}} \right)^2}{\rho _2} = {\gamma ^t}\left( {{\rho _1} - {\gamma ^t}{\rho _2}} \right) = &\frac{{{\gamma ^0}{\rho _1} - {{\left( {{\gamma ^0}} \right)}^2}{\rho _2}}}{{\sqrt {t + 1} }} \nonumber\\
    \geqslant& {\gamma ^t}\left( {{\rho _1} - {\gamma ^0}{\rho _2}} \right),
\end{align}
which indicates $\frac{{{\gamma ^0}}}{{\sqrt {t + 1} }} \geqslant {\gamma ^t}$. 
Based on that, we can bound the second term on the right hand side of (\ref{min gradient}) as
\begin{align}
    \label{term 2}
   & \frac{{\sum\limits_{t = 0}^T {\left[ {{\gamma ^t}{\rho _3} + {{\left( {{\gamma ^t}} \right)}^2}{\rho _4}} \right]} }}{{\sum\limits_{t = 0}^T {\frac{{{\gamma ^0}{\rho _1} - {{\left( {{\gamma ^0}} \right)}^2}{\rho _2}}}{{\sqrt {t + 1} }}} }}\leqslant  \frac{{\sum\limits_{t = 0}^T {\left[ {\frac{{{\gamma ^0}}}{{\sqrt {t + 1} }}{\rho _3} + {{\left( {\frac{{{\gamma ^0}}}{{\sqrt {t + 1} }}} \right)}^2}{\rho _4}} \right]} }}{{\sum\limits_{t = 0}^T {\frac{{{\gamma ^0}{\rho _1} - {{\left( {{\gamma ^0}} \right)}^2}{\rho _2}}}{{\sqrt {t + 1} }}} }}  \nonumber \\
   =& \frac{{\sum\limits_{t = 0}^T {\left[ {\frac{{{\gamma ^0}}}{{\sqrt {t + 1} }}{\rho _3}} \right]} }}{{\sum\limits_{t = 0}^T {\frac{{{\gamma ^0}{\rho _1} - {{\left( {{\gamma ^0}} \right)}^2}{\rho _2}}}{{\sqrt {t + 1} }}} }} + \frac{{\sum\limits_{t = 0}^T {\left[ {\frac{{{{\left( {{\gamma ^0}} \right)}^2}}}{{t + 1}}{\rho _4}} \right]} }}{{\sum\limits_{t = 0}^T {\frac{{{\gamma ^0}{\rho _1} - {{\left( {{\gamma ^0}} \right)}^2}{\rho _2}}}{{\sqrt {t + 1} }}} }}  \nonumber \\
   \leqslant& \frac{{{\rho _3}}}{{{\rho _1} - {\gamma ^0}{\rho _2}}} + \frac{{{{\left( {{\gamma ^0}} \right)}^2}{\rho _4}\left[ {2 + \log \left( {T + 1} \right)} \right]}}{{\left[ {{\gamma ^0}{\rho _1} - {{\left( {{\gamma ^0}} \right)}^2}{\rho _2}} \right]\sqrt {T + 1} }},
\end{align}
where the last inequality is derived from (\ref{basic ineq T1}) and the inequality $\sum\limits_{t = 0}^T {\frac{1}{{t + 1}}}  \leqslant 2 + \log \left( {T + 1} \right)$.
Finally, substituting (\ref{term 2}) into (\ref{min gradient}) yields Theorem~\ref{convergence performance CRA decay}. 
\end{proof}
\begin{remark}
\label{remark decay lr}
In Theorem~\ref{convergence performance CRA decay}, on the right-hand side of (\ref{convergence decay lr}), when \( T \) approaches infinity, the asymptotic solution error approaches (\ref{error deacy}),
\begin{figure*}
\begin{align}
\label{error deacy}
    \frac{{{\rho _3}}}{{{\rho _1} - {\gamma ^0}{\rho _2}}} \approx \frac{{{\beta ^2}{M^2}\sqrt {2C_\alpha ^2} \left( {1 - \frac{r}{M}} \right)}}{{1 - 2\left( {1 - \frac{r}{M}} \right)\sqrt {2C_\alpha ^2}  - {\gamma ^0}\left\{ {\frac{{\left( {{\phi _1} - {\phi _2}} \right)2L}}{{{{\left( {1 - \alpha } \right)}^2}{N^2}}} + \frac{{{\phi _2}L}}{{{{\left( {1 - \alpha } \right)}^2}N}} + 8\frac{{LC_\alpha ^2}}{N}{{\left( {1 - \frac{r}{M}} \right)}^2}} \right\}}}.
\end{align}  
\end{figure*}
based on \( d_{\min}M \approx Nr \) when the values of \( d_1, \ldots, d_M \) do not vary significantly from each other. When \( r \) increases, i.e., the redundancy of data allocation increases, the asymptotic solution error provided in (\ref{error deacy}) diminishes. Specially, the asymptotic solution error reaches zero for \( r = M \). In addition, by decreasing the value of \( C_\alpha \), the asymptotic solution error decreases and better learning performance can be attained by CRA-DL with decaying learning rates.
\end{remark}
\begin{remark}
      In Theorem~\ref{convergence performance CRA} and Theorem~\ref{convergence performance CRA decay}, the bounds include the constant $C_\alpha$. Note that $C_\alpha$ depends on the fraction of Byzantine devices, i.e., $\alpha$, as observed from the values of \(C_\alpha^2\) for some commonly used RBA rules provided in Table I. Based on this, the bounds in Theorem~\ref{convergence performance CRA} and Theorem~\ref{convergence performance CRA decay} are both determined by the fraction of Byzantine devices. 
    
\end{remark}
\section{Numerical results}
\label{simulations}
In this section, we demonstrate the performance of the proposed method through numerical results on various tasks. We consider a commonly encountered Byzantine attack, namely Sign-flipping attack \cite{shi2022challenges}, which is state-of-the-art and has been used in much recent work such as \cite{zhu2023byzantine,yang2025enhanced}. For comparison, the following baseline methods are considered:

\begin{itemize} \item \textbf{Mean Averaging (MA):} The training data are allocated to the devices non-redundantly, and the server uses the mean aggregation rule to aggregate the local gradients from the honest devices and the disruptive messages from the Byzantine devices. \item \textbf{RBA-DL \cite{luan2024robust,yin2018byzantine,xie2018phocas}:} The training data are allocated to the devices non-redundantly. The server applies RBA rules to the local gradients from the honest devices and the disruptive messages from the Byzantine devices. \item \textbf{DL in the original SGC scheme (SGC-DL) \cite{bitar2020stochastic}:} The training data are allocated to the devices in a pair-wise balanced manner before training. During training iterations, the honest devices transmit coded gradients to the server, and the server aggregates the coded gradients from the honest devices and the disruptive information from the Byzantine devices using the mean aggregation rule. \item \textbf{Clairvoyant Method:} The training data are allocated to the devices non-redundantly. The server knows the identities of the devices in each iteration and only aggregates the local gradients from the honest devices using the mean aggregation rule. \end{itemize}
\subsection{Linear Regression Task}
\label{linear regress sim}
We first consider a linear regression task with a synthetic dataset. In this task, the number of devices is \(N=100\) and the loss function can be expressed as
\begin{align}
\label{LR loss}
F\left( {\mathbf{x}} \right)= \sum\limits_{k = 1}^m {{f_k}\left( {\mathbf{x}} \right)},
{f_k}\left( {\mathbf{x}} \right) = \frac{1}{2}{\left( {\left\langle {{\mathbf{x}},{{\mathbf{z}}_k}} \right\rangle  - {y_k}} \right)^2},
\end{align}
where \(m = 1000\), \({{\mathbf{z}}_k} \in {\mathbb{R}^{100}}\), \(y_k \in {\mathbb{R}}\), \(k=1,...,1000\), and \(\mathbf{x} \in {\mathbb{R}^{100}}\). In this problem, the overall dataset \(\mathcal{D}\) consists of \(m=1000\) training data samples \(\left\{ {{{\mathbf{z}}_k},{y_k}} \right\}\), which are divided into 1000 subsets, each containing one data sample. Before the training starts, the training subsets are allocated uniformly and randomly to the devices so that each device obtains \(r\) subsets, which is an effective approximation of the pair-wise balanced allocation of the subsets. In (\ref{LR loss}), all the elements in \(\left\{ {{{\mathbf{z}}_1}, \dots, {{\mathbf{z}}_m}} \right\}\) are drawn independently from the normal distribution \(\mathcal{N}\left( {0,100} \right)\). To generate the values of \(y_k\), we first generate a random vector \(\mathbf{\overset{\lower0.5em\hbox{$\smash{\scriptscriptstyle\frown}$}}{x} }\) whose 100 elements are drawn from the standard normal distribution. 
To control the heterogeneity among the data subsets, \(y_k\) is generated as \(y_k \sim \mathcal{N}\left( \left\langle \mathbf{z}_k, \mathbf{\overset{\lower0.5em\hbox{$\smash{\scriptscriptstyle\frown}$}}{x}} + \mathbf{\tilde{x}}_k \right\rangle, 1 \right)\), for all \(k\), where \(\mathbf{\tilde{x}}_k \sim \mathcal{N}(0, k^2\sigma_H^2)\). A larger value of \(\sigma_H\) corresponds to a higher level of heterogeneity among the subsets.
Unless specified, the learning rate is fixed at $\gamma=0.001$ and we set $r=40$ and \(\sigma_H=0\) in our method. For the proposed method, we adopt four robust aggregation rules at the server that have been introduced in the literature, namely coordinate-wise median~\cite{yin2018byzantine}, trimmed mean~\cite{yin2018byzantine}, Phocas~\cite{xie2018phocas}, and MCA~\cite{luan2024robust}. Among these rules, coordinate-wise median, trimmed mean, and Phocas have been formally proven to be RBA rules \cite{dong2023byzantine}, whereas MCA has not yet been theoretically shown to satisfy the RBA properties. Based on this, the numerical results demonstrate that even robust aggregation rules that are not guaranteed to satisfy the RBA properties can be applied within the proposed method and still achieve improved Byzantine robustness.

\begin{figure}
    \centering
      \includegraphics[width=0.9\linewidth]{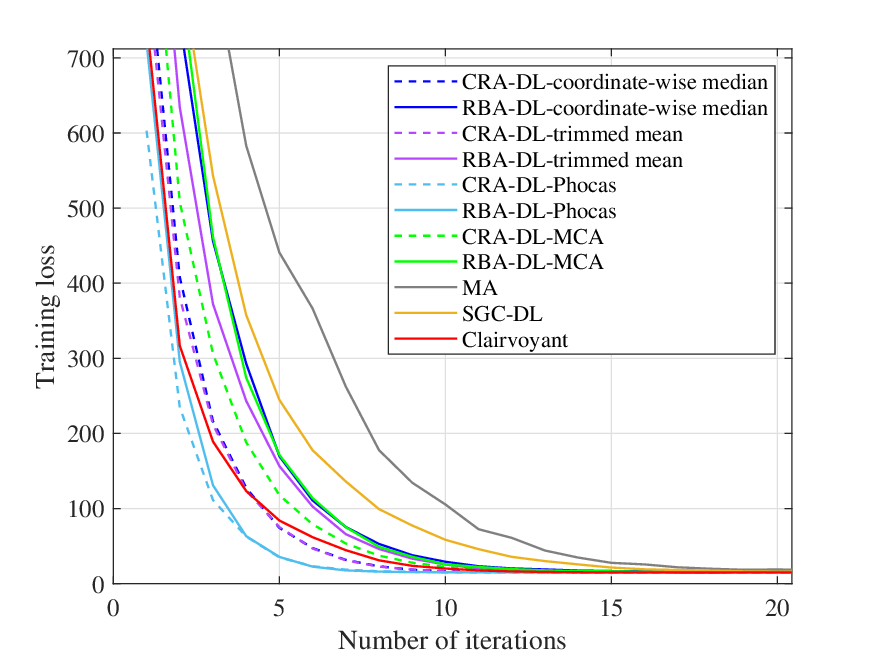}
    \caption{Training loss as a function of the number of iterations on the linear regression task for different methods with various RBA rules.}
    \label{fig: baseline}
\end{figure}

To compare the performance of the proposed method with the baseline methods, we plot the training loss as a function of the number of iterations for different methods in Fig.~\ref{fig: baseline}, where a number of RBA rules are applied for the proposed CRA-DL method and the baseline RBA-DL method. Here, we set $\alpha=0.2$. It can be observed that the learning performance of MA is easily influenced by the Byzantine attacks, which is because the disruptive information from Byzantine devices and the true information from honest devices are treated equally during aggregation. This aligns with our intuition. Among all the methods, the proposed method attains the best learning performance compared to the baseline methods. The advantage of our method over RBA-DL and SGC-DL lies in its ability to leverage the strengths of RBA rules and SGC simultaneously. This improves the robustness of the aggregation at the server by reducing the distance among messages sent by honest devices and more effectively mitigates the negative impact of disruptive information sent by Byzantine devices. It is worth noting that the clairvoyant method does not achieve the best learning performance, even though the server knows the identities of all devices. This is because the server discards the messages from Byzantine devices, meaning the data samples allocated to those devices, without allocation redundancy, cannot be utilized to update the global model. In contrast, our method allocates training data redundantly to devices. This allows our method not only to evade the disruptive information from Byzantine devices, preventing it from misleading the learning process, but also to compensate for the missing information from Byzantine devices with messages from honest devices based on data allocation redundancy.

To investigate the influence of data allocation redundancy on the learning performance of our method, we depict the training loss of CRA-DL as a function of the number of iterations under various values of $r$ in Fig.~\ref{fig: redundancy}. To clearly illustrate the trade-off, we also present the Pareto front between data allocation redundancy and training loss under a fixed number of iterations. Here, we consider different settings with various values of $\alpha$ and $\sigma_H$, where a larger value of $\alpha$ represents a higher fraction of Byzantine devices. A larger value of $\sigma_H$ indicates a higher level of heterogeneity among the training subsets. To be more specific, in Fig.~\ref{fig: redun02} and Fig.~\ref{fig: redun04}, we set $\sigma_H = 0$, whereas in Fig.~\ref{fig: hete r}, we set $\sigma_H = 0.001$. In both Fig.~\ref{fig: redun02} and Fig.~\ref{fig: hete r}, we use $\alpha = 0.2$, while in Fig.~\ref{fig: redun04}, we set $\alpha = 0.4$.
In Fig.~\ref{fig: redun02}, Fig.~\ref{fig: redun04}, and Fig.~\ref{fig: hete r}, the Pareto front is shown after 5 iterations, 10 iterations, and 30 iterations, respectively. 
In this scenario, the coordinate-wise median is adopted as the RBA rule. It can be observed that as the value of $r$ increases, indicating greater redundancy in data allocation, the learning performance of the proposed method improves, which is consistent with our theoretical analysis. This improvement and its alignment with our analysis are reflected in the following observations:

    
\begin{figure*}[!t]
    \centering
    \subfloat[]{%
        \includegraphics[width=0.32\textwidth]{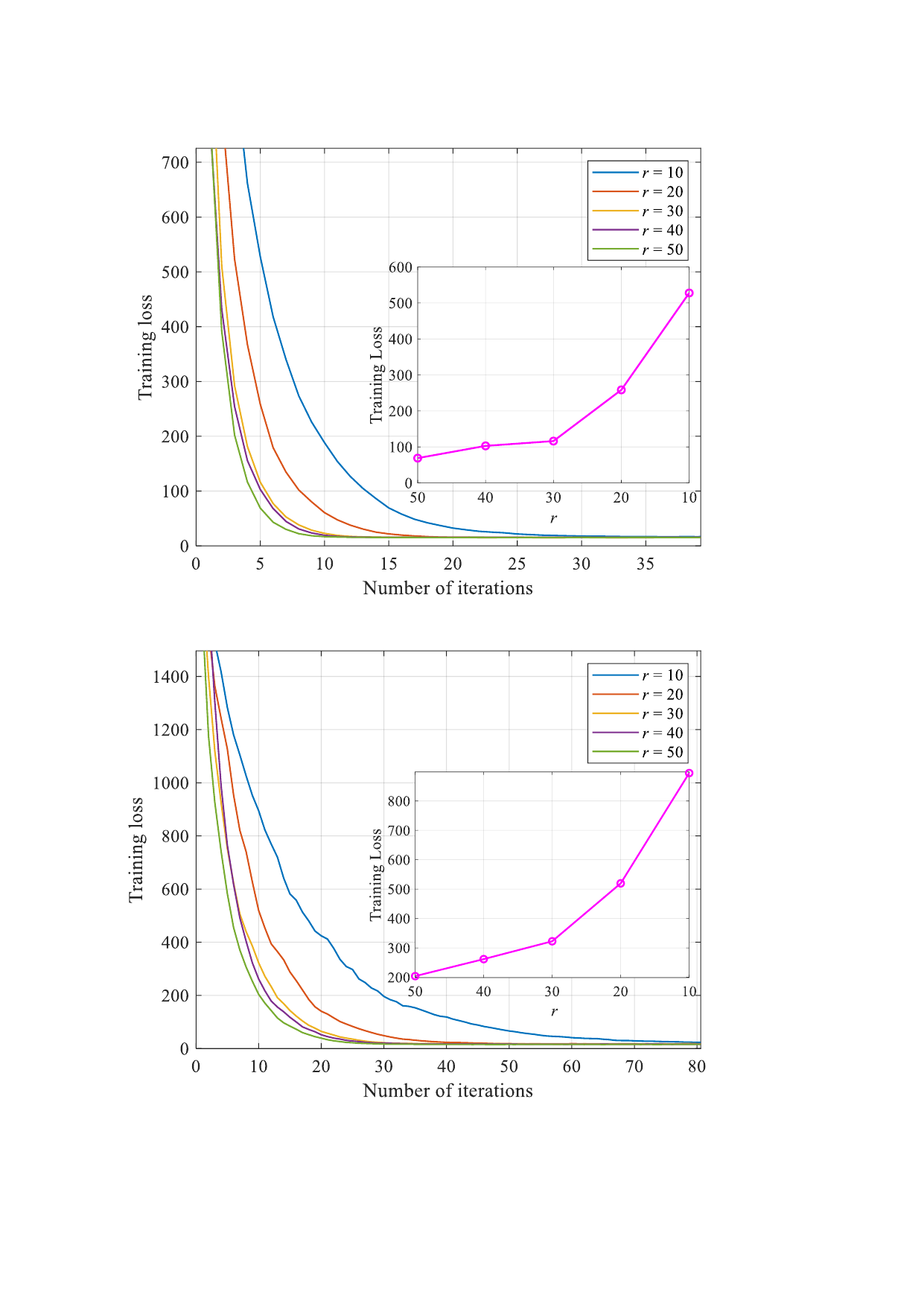}
        \label{fig: redun02}
    }
    \hfill
    \subfloat[]{%
        \includegraphics[width=0.32\textwidth]{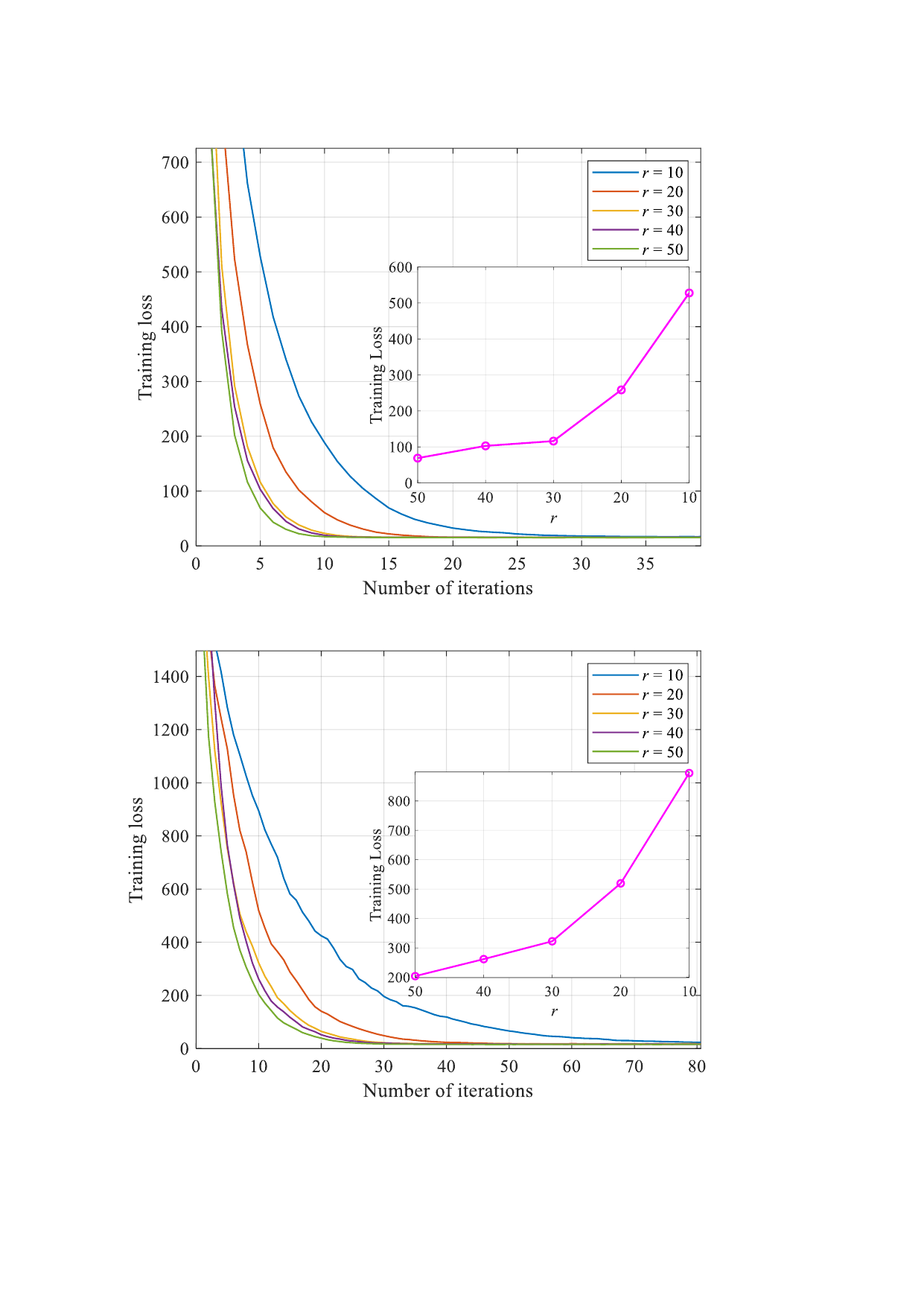}
        \label{fig: redun04}
    }
    \hfill
    \subfloat[]{%
        \includegraphics[width=0.32\textwidth]{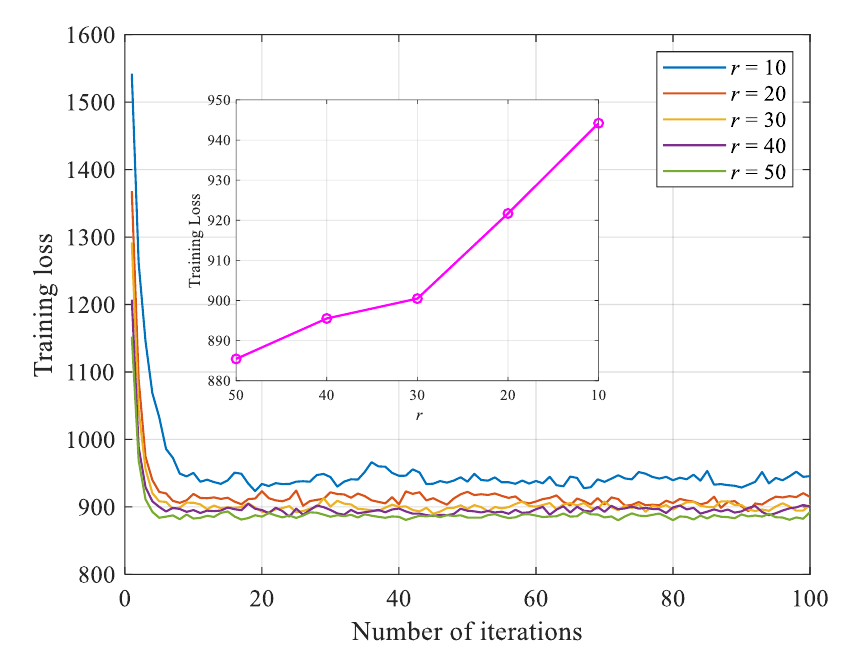}
        \label{fig: hete r}
    }
    \caption{Training loss as a function of the number of iterations on the linear regression task for CRA-DL under various values of $r$. The subfigures depict the training loss as a function of $r$ under a fixed number of iterations. (a) $\alpha=0.2, \sigma_H=0$. (b) $\alpha=0.4, \sigma_H=0$. (c) $\alpha=0.2, \sigma_H=0.001$.}
    \label{fig: redundancy}
\end{figure*}

\begin{enumerate}
\item In Fig.~\ref{fig: redundancy}, it can be observed that the proposed algorithm converges at a faster rate as the value of $r$ increases, which is consistent with Remark~\ref{remark fix lr cong rate}.
\item Under the settings of plotting Fig.~\ref{fig: redun02} and Fig.~\ref{fig: redun04}, the heterogeneity among the data subsets is very small, as all training subsets are drawn from the same distribution. According to our analysis in Theorem~\ref{convergence performance CRA} and Remark~\ref{remark fix lr}, when the heterogeneity among subsets is negligible, the solution error is also expected to be very small. For the considered linear regression problem, which is strongly convex and smooth, a very small solution error implies that the training loss can be reduced to near its minimum after a sufficiently large number of iterations, and vice versa.
In Fig.~\ref{fig: redun02} and Fig.~\ref{fig: redun04}, we can observe that a similar and very small training loss is attained under all settings after a large number of iterations. This confirms that the theoretical analysis presented in the paper is consistent with the numerical results.
\item In Fig.~\ref{fig: hete r}, where significant heterogeneity exists among the subsets, it can be observed that as the value of $r$ increases, a much lower training loss is achieved after a sufficiently large number of iterations. This implies that the solution error is reduced, which is consistent with the claims presented in Remark~\ref{remark fix lr} and Remark~\ref{remark fix lr cong rate} in Section~\ref{performance analysis}.
\end{enumerate}
These observations align with our intuition, which implies that improving learning performance and enhancing robustness against Byzantine attacks with an increasing value of $r$ come at the cost of increased computation and storage burdens on the devices. In practice, an appropriate trade-off between learning performance and the computation and storage burdens should be determined.

To investigate the learning performance of the proposed method under different levels of Byzantine attacks, we plot the training loss as a function of the number of iterations for CRA-DL under various values of $\alpha$ in Fig.~\ref{fig: no error}, where the coordinate-wise median is adopted as the RBA rule. It can be seen that as the value of $\alpha$ decreases, the learning performance of the proposed method improves. This aligns with our intuition, considering that better learning performance is expected with fewer Byzantine devices. When $\alpha = 0$, there is no Byzantine attack in the system. This case provides the optimal performance bound for learning under Byzantine attacks.
In addition, as shown in Fig.~\ref{fig: no error}, when $\alpha = 0.1$, the proposed method achieves nearly the same learning performance as in the case without any Byzantine attacks. This result indicates that the proposed method can almost completely mitigate the negative impact of Byzantine attacks when the number of Byzantine devices is modestly small.
It is worth noting that existing gradient coding methods designed to handle Byzantine attacks, such as \cite{hofmeister2024byzantine}, can also achieve the same learning performance, matching that of the case without Byzantine attacks. However, as pointed out in \cite{hofmeister2024byzantine}, the lower bound on the redundancy level in training data allocation required to achieve such robustness is $r = 110$ in this considered setting.
In contrast, the proposed method achieves comparable learning performance with a significantly lower redundancy level of $r = 40$. This substantial reduction in redundancy leads to much lower computational and storage burdens on the devices, highlighting the practical efficiency of the proposed approach under Byzantine attacks. The rationale behind the superiority of the proposed method is as follows. Given that machine learning algorithms are generally robust to noise, it is not necessary to fully recover the true global gradient in each iteration to update the global model, as is required in existing gradient coding methods designed to handle Byzantine attacks. Instead, it is often more efficient to obtain an approximate version of the true global gradient, as is done in the proposed method. As a result, the proposed method only requires a modest level of redundancy to achieve the same level of robustness to Byzantine attacks, thereby reducing the computational and storage burdens on the devices.

To demonstrate the robustness of the proposed method to the heterogeneity among subsets of the training data, we compare its performance with RBA-DL and plot the training loss as a function of the number of iterations in Fig.~\ref{fig: het} for both methods under various levels of heterogeneity, where the Sign-flipping attack is adopted with $\alpha=0.2$. Here, both methods use coordinate-wise median as the RBA rule for server-side aggregation. The key difference is that the proposed method aggregates coded gradients, whereas RBA-DL aggregates local gradients directly.
As shown in Fig.~\ref{fig: het}, the learning performance of the proposed method is significantly better than that of the baseline, especially under high heterogeneity. This is because, for the baseline under such scenarios, the local gradients sent by the honest devices differ significantly, which causes the global update obtained through mean aggregation in RBA-DL to deviate further from the true global gradient due to the influence of incorrect messages from Byzantine devices. In contrast, in the proposed method, the coded gradients sent by the honest devices remain close to each other, even under substantial heterogeneity. This improves the robustness of the aggregation at the server and results in a more accurate global model update, thereby enhancing the overall learning performance.

\subsection{Image Classification Task}
\label{image class}
We next consider an image classification task on the MNIST dataset~\cite{lecun1998mnist}, where \(N = 100\) devices collaboratively train a convolutional neural network consisting of two convolutional layers followed by two linear layers. The loss function used is the categorical cross-entropy loss, which is non-convex in this task. The training dataset comprises the MNIST training set, which contains 60{,}000 samples across 10 classes of handwritten digits.
To simulate a setting with zero heterogeneity among subsets, the training samples are randomly divided into \(M = 100\) subsets, with each subset containing the same number of samples. To simulate a setting with a high level of heterogeneity, the training samples are divided into \(M = 100\) subsets such that each subset contains samples from only a single class, also with equal size.
Before training, the subsets are allocated uniformly and randomly to the devices, so that each device holds \(r = 10\) subsets. For all experiments, the learning rate is set to \(\gamma = 0.01\). In the proposed method, the coordinate-wise median is used as the robust aggregation rule. In each iteration, a fraction \(\alpha = 0.2\) of the devices are assumed to be Byzantine.

\begin{figure}
    \centering
        \includegraphics[width=0.9\linewidth]{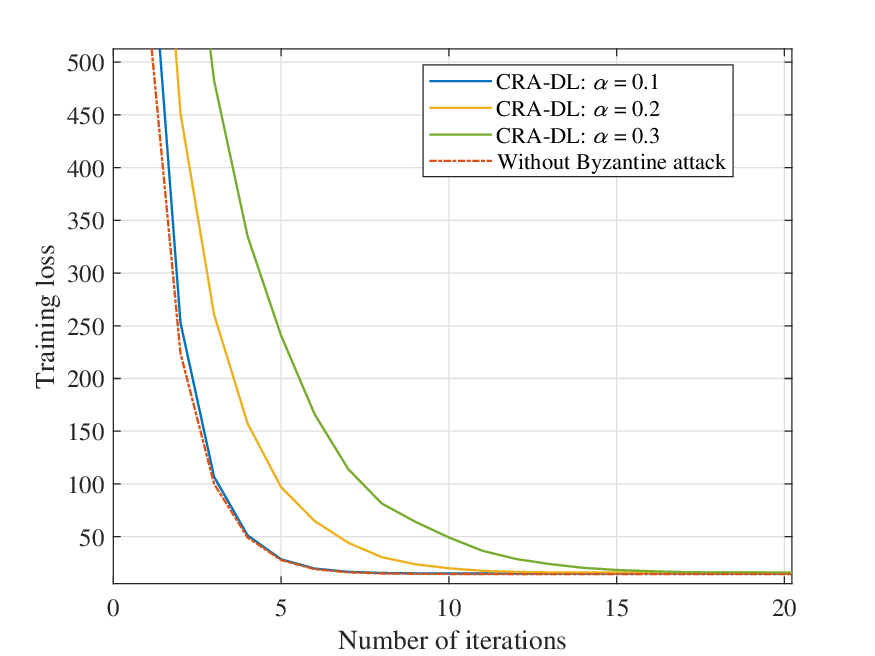}

    \caption{Training loss as a function of the number of iterations on the linear regression task for CRA-DL under various values of $\alpha$, where the case "Without Byzantine attack" corresponds to $\alpha=0$. {Note that, existing gradient coding methods designed to handle Byzantine attacks, such as \cite{hofmeister2024byzantine}, can also achieve the same learning performance, matching that of the case without Byzantine attacks.}}
    \label{fig: no error}
\end{figure}

\begin{figure}
    \centering
        \includegraphics[width=0.9\linewidth]{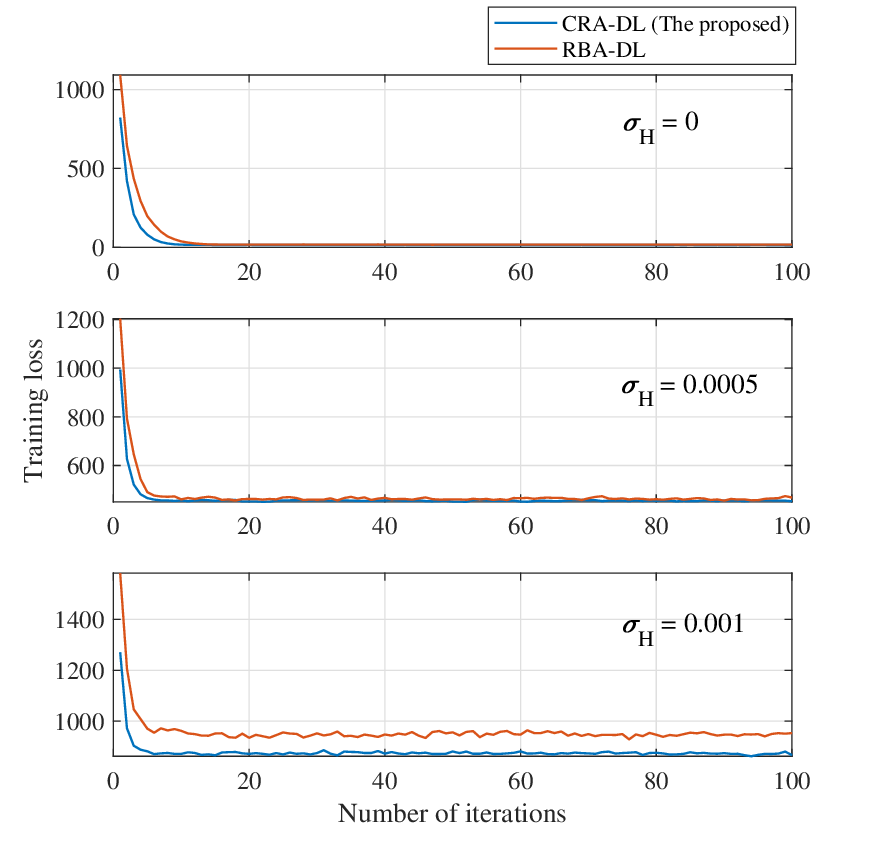}
     \caption{{Training loss as a function of the number of iterations on the linear regression task for CRA-DL and RBA-DL under various levels of heterogeneity among data subsets.}}
    \label{fig: het}
\end{figure}

To compare the performance of the proposed method with baseline methods, including MA, RBA-DL, and SGC-DL, we depict the training loss, training accuracy, test loss, and test accuracy as functions of the number of iterations for the different methods in Fig.~\ref{fig: iid mnist} and Fig.~\ref{fig: noniid mnist}, under varying levels of heterogeneity among the training data subsets. The same RBA rule is applied in both the proposed CRA-DL method and the baseline RBA-DL method for a fair comparison. In Fig.~\ref{fig: iid mnist}, there is no heterogeneity among the subsets, whereas in Fig.~\ref{fig: noniid mnist} significant heterogeneity is introduced. For better presentation of the results, the curves in Fig.~\ref{fig: noniid mnist} have been smoothed over a window of 150 iterations.
 
In Fig.~\ref{fig: iid mnist}, it can be observed that MA and SGC-DL perform the worst among all methods. This is because the mean aggregation at the server is significantly influenced by the incorrect information from the Byzantine devices, which aligns with our intuition and the results previously shown for the linear regression task in Fig.~\ref{fig: baseline}. Compared with MA and SGC-DL, RBA-DL achieves better performance, as the RBA rule at the server can partially filter out misleading information and thus mitigate its negative impact.
Among all methods, the proposed one achieves the best performance in both training and testing. This improvement stems not only from the robustness of the RBA rules at the server but also from the benefits of data allocation redundancy and gradient coding, which further enhance the robustness of the aggregation process.

In contrast to the observations in Fig.~\ref{fig: iid mnist}, Fig.~\ref{fig: noniid mnist} shows that when the heterogeneity among the subsets is significant, the performance of RBA-DL degrades substantially, even falling below that of MA and SGC-DL. This is because, under high heterogeneity, the local gradients sent by honest devices differ significantly, limiting the effectiveness of the RBA rule at the server. As a result, RBA-DL can no longer effectively filter out the incorrect messages from Byzantine devices.
In contrast, the proposed method achieves better performance by fully leveraging the benefits of gradient coding. The encoded gradients from honest devices become more similar, thereby enhancing the robustness of the aggregation process when RBA rules are applied at the server.
\begin{figure*}
    \centering
        \includegraphics[width=0.8\linewidth]{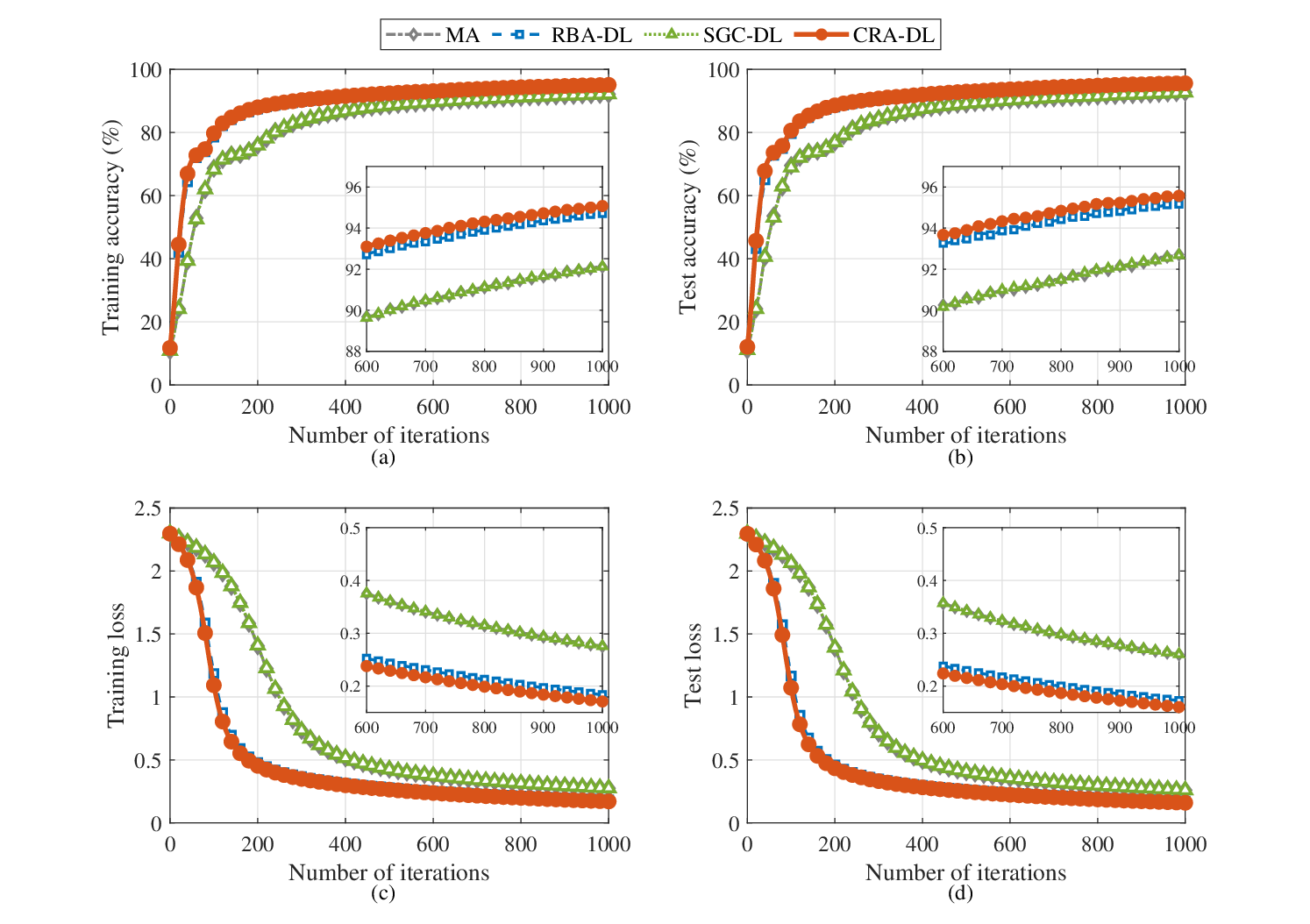}
    \caption{{{Training accuracy, test accuracy, training loss, and test loss as functions of the number of iterations for different methods on the image classification task without heterogeneity among data subsets. (a) Training accuracy. (b) Test accuracy. (c) Training loss. (d) Test loss.}}}
    \label{fig: iid mnist}
\end{figure*}

\begin{figure*}
    \centering
        \includegraphics[width=0.8\linewidth]{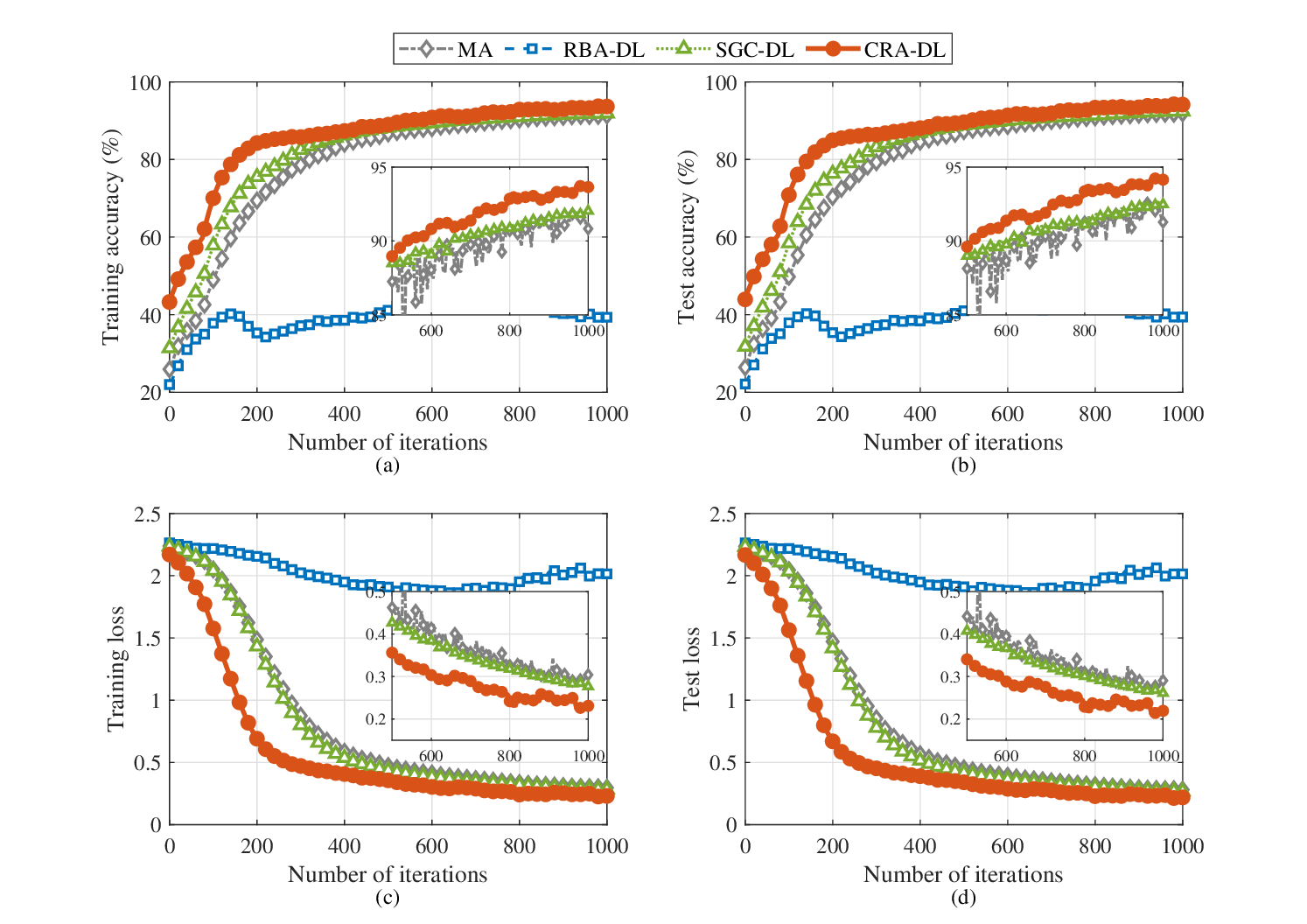}
    \caption{{{Training accuracy, test accuracy, training loss, and test loss as functions of the number of iterations for different methods on the image classification task with heterogeneity among data subsets. (a) Training accuracy. (b) Test accuracy. (c) Training loss. (d) Test loss.}}}
    \label{fig: noniid mnist}
\end{figure*}

\section{Conclusions}
\label{conclusions}
In this paper, the DL problem under Byzantine attacks was studied. To overcome the limitation of current DL methods applying RBA rules, we proposed CRA-DL. In the proposed method, before the training, the subsets of the training data are allocated to the devices in a pair-wise balanced manner. During training iterations, the server receives coded gradients from the honest devices and disruptive information from the Byzantine devices, and aggregates these information using RBA rules. By doing this, the global gradient is approximately recovered by the server to update the global model. The convergence performance of CRA-DL was analyzed and we provided numerical results to demonstrate the superiority of the proposed method compared to the baselines. 
 The proposed CRA-DL method has important practical implications for real-world DL systems. First, it enhances robustness to Byzantine attacks while requiring only a modest level of redundancy in data allocation, thereby reducing the computational and storage burdens on devices. This is an essential advantage for resource-constrained systems. Second, CRA-DL maintains strong robustness in the presence of data heterogeneity among subsets, making it suitable for applications such as healthcare, finance, and autonomous systems, where Byzantine resilience is critically needed under diverse operating conditions. Finally, as a meta-algorithm, CRA-DL is compatible with a wide range of existing RBA rules, enabling seamless integration into existing DL frameworks that already employ such rules.
In the future, we plan to extend the proposed method to scenarios where communication resources are highly limited, by compressing the communication between the devices and the server to mitigate the communication overhead.

\appendices
\section{Proof of Lemma \ref{max distance}}
\label{appendix lemma max distance}
Based on allocation of the training data in the pair-wise balanced manner, for $\forall i\neq j$, we have
\begin{align}
    \label{distance i and j}
  &{\mathbf{g}}_i^t - {\mathbf{g}}_j^t 
   = \sum\limits_{{k_1} \in \left\{ {\left. {{k_1}} \right|s\left( {i,{k_1}} \right) \ne 0} \right\}} {\frac{1}{{{d_{{k_1}}}}}\nabla {f_{{k_1}}}\left( {{{\mathbf{x}}^t}} \right)} \nonumber\\
   &- \sum\limits_{{k_2} \in \left\{ {\left. {{k_2}} \right|s\left( {j,{k_2}} \right) \ne 0} \right\}} {\frac{1}{{{d_{{k_2}}}}}\nabla {f_{{k_2}}}\left( {{{\mathbf{x}}^t}} \right)}  \nonumber \\
   =& \sum\limits_{{k_1} \in \left\{ {\left. {{k_1}} \right|s\left( {i,{k_1}} \right) \ne 0,s\left( {j,{k_1}} \right) = 0} \right\}} {\frac{1}{{{d_{{k_1}}}}}\nabla {f_{{k_1}}}\left( {{{\mathbf{x}}^t}} \right)} \nonumber\\
   &- \sum\limits_{{k_2} \in \left\{ {\left. {{k_2}} \right|s\left( {i,{k_2}} \right) = 0,s\left( {j,{k_2}} \right) \ne 0} \right\}} {\frac{1}{{{d_{{k_2}}}}}\nabla {f_{{k_2}}}\left( {{{\mathbf{x}}^t}} \right)},
\end{align}
according to (\ref{encoding}). Based on (\ref{distance i and j}), we can obtain (\ref{max distance i and j}), where the first three inequalities are derived from the following basic inequality:
\begin{align}
    \label{basic ineq 1}
    \left\| \sum_{i=1}^{n} \mathbf{a}_i \right\|^2 \leq n \sum_{i=1}^{n} \left\| \mathbf{a}_i \right\|^2, &\forall\mathbf{a}_i \in \mathbb{R}^D,
\end{align}
together with the fact that the set \({\left\{ {\left. {{k_1}} \right|s\left( {i,{k_1}} \right) \ne 0,s\left( {j,{k_1}} \right) = 0} \right\}}\) and the set \({\left\{ {\left. {{k_2}} \right|s\left( {i,{k_2}} \right) = 0,s\left( {j,{k_2}} \right) \ne 0} \right\}}\) both contain \({\left( {r - \frac{{{r^2}}}{M}} \right)}\) elements, the fourth inequality is derived from Assumption~\ref{assp bounded heter}, and the last inequality can be easily obtained according to the definition of $d_{min}$. This completes the proof. 
\begin{figure*}
\begin{align}
    \label{max distance i and j}
 & {\max _{i,j \in \left\{ {1,...,N} \right\}}}{\left\| {{\mathbf{g}}_i^t - {\mathbf{g}}_j^t} \right\|^2} \nonumber \\
   \leqslant & 2{\max _{i,j \in \left\{ {1,...,N} \right\}}}{\left\| {\sum\limits_{{k_1} \in \left\{ {\left. {{k_1}} \right|s\left( {i,{k_1}} \right) \ne 0,s\left( {j,{k_1}} \right) = 0} \right\}} {\frac{1}{{{d_{{k_1}}}}}\nabla {f_{{k_1}}}\left( {{{\mathbf{x}}^t}} \right)} } \right\|^2} + {\left\| {\sum\limits_{{k_2} \in \left\{ {\left. {{k_2}} \right|s\left( {i,{k_2}} \right) = 0,s\left( {j,{k_2}} \right) \ne 0} \right\}} {\frac{1}{{{d_{{k_2}}}}}\nabla {f_{{k_2}}}\left( {{{\mathbf{x}}^t}} \right)} } \right\|^2} \nonumber \\
   \leqslant & 2{\max _{i,j \in \left\{ {1,...,N} \right\}}}\left\{ {\left( {r - \frac{{{r^2}}}{M}} \right)\sum\limits_{{k_1} \in \left\{ {\left. {{k_1}} \right|s\left( {i,{k_1}} \right) \ne 0,s\left( {j,{k_1}} \right) = 0} \right\}} {{{\left\| {\frac{1}{{{d_{{k_1}}}}}\nabla {f_{{k_1}}}\left( {{{\mathbf{x}}^t}} \right) - \frac{1}{M}\frac{1}{{{d_{{k_1}}}}}\nabla F\left( {\mathbf{x}^t} \right) + \frac{1}{M}\frac{1}{{{d_{{k_1}}}}}\nabla F\left( {\mathbf{x}^t} \right)} \right\|}^2}} } \right. \nonumber \\
  & + \left. {\left( {r - \frac{{{r^2}}}{M}} \right)\sum\limits_{{k_2} \in \left\{ {\left. {{k_2}} \right|s\left( {i,{k_2}} \right) = 0,s\left( {j,{k_2}} \right) \ne 0} \right\}} {{{\left\| {\frac{1}{{{d_{{k_2}}}}}\nabla {f_{{k_2}}}\left( {{{\mathbf{x}}^t}} \right) - \frac{1}{M}\frac{1}{{{d_{{k_2}}}}}\nabla F\left( {\mathbf{x}^t} \right) + \frac{1}{M}\frac{1}{{{d_{{k_2}}}}}\nabla F\left( {\mathbf{x}^t} \right)} \right\|}^2}} } \right\}\nonumber\\
   \leqslant& 2{\max _{i,j \in \left\{ {1,...,N} \right\}}}\left\{ {2\left( {r - \frac{{{r^2}}}{M}} \right)\sum\limits_{{k_1} \in \left\{ {\left. {{k_1}} \right|s\left( {i,{k_1}} \right) \ne 0,s\left( {j,{k_1}} \right) = 0} \right\}} {{{\left\| {\frac{1}{{{d_{{k_1}}}}}\nabla {f_{{k_1}}}\left( {{{\mathbf{x}}^t}} \right) - \frac{1}{M}\frac{1}{{{d_{{k_1}}}}}\nabla F\left( {\mathbf{x}^t} \right)} \right\|}^2} + {{\left\| {\frac{1}{M}\frac{1}{{{d_{{k_1}}}}}\nabla F\left( {\mathbf{x}^t} \right)} \right\|}^2}} } \right. \nonumber \\
  & + \left. {2\left( {r - \frac{{{r^2}}}{M}} \right)\sum\limits_{{k_2} \in \left\{ {\left. {{k_2}} \right|s\left( {i,{k_2}} \right) = 0,s\left( {j,{k_2}} \right) \ne 0} \right\}} {{{\left\| {\frac{1}{{{d_{{k_2}}}}}\nabla {f_{{k_2}}}\left( {{{\mathbf{x}}^t}} \right) - \frac{1}{M}\frac{1}{{{d_{{k_2}}}}}\nabla F\left( {\mathbf{x}^t} \right)} \right\|}^2} + {{\left\| {\frac{1}{M}\frac{1}{{{d_{{k_2}}}}}\nabla F\left( {\mathbf{x}^t} \right)} \right\|}^2}} } \right\} \nonumber \\
   \leqslant &2{\max _{i,j \in \left\{ {1,...,N} \right\}}}\left\{ {2\left( {r - \frac{{{r^2}}}{M}} \right)\sum\limits_{{k_1} \in \left\{ {\left. {{k_1}} \right|s\left( {i,{k_1}} \right) \ne 0,s\left( {j,{k_1}} \right) = 0} \right\}} {\left( {\frac{1}{{d_{{k_1}}^2}}{\beta ^2} + \frac{1}{{d_{{k_1}}^2{M^2}}}{{\left\| {\nabla F\left( {\mathbf{x}^t} \right)} \right\|}^2}} \right)} } \right. \nonumber \\
  & + \left. {2\left( {r - \frac{{{r^2}}}{M}} \right)\sum\limits_{{k_2} \in \left\{ {\left. {{k_2}} \right|s\left( {i,{k_2}} \right) = 0,s\left( {j,{k_2}} \right) \ne 0} \right\}} {\left( {\frac{1}{{d_{{k_2}}^2}}{\beta ^2} + \frac{1}{{d_{{k_2}}^2{M^2}}}{{\left\| {\nabla F\left( {\mathbf{x}^t} \right)} \right\|}^2}} \right)} } \right\} \nonumber \\
   \leqslant& 8\frac{1}{{d_{\min }^2}}{\left( {r - \frac{{{r^2}}}{M}} \right)^2}\left( {{\beta ^2} + \frac{1}{{{M^2}}}{{\left\| {\nabla F\left( {\mathbf{x}^t} \right)} \right\|}^2}} \right).
\end{align}
\end{figure*}
\section{Proof of Lemma~\ref{honest average}}
\label{appendix lemma honest average}
Let us define
\begin{align}
    \label{a t}
    {{\mathbf{A}}^t} \triangleq \left[ {\frac{1}{{{d_1}}}\nabla {f_1}\left( {{{\mathbf{x}}^t}} \right),\frac{1}{{{d_2}}}\nabla {f_2}\left( {{{\mathbf{x}}^t}} \right),...,\frac{1}{{{d_M}}}\nabla {f_M}\left( {{{\mathbf{x}}^t}} \right)} \right],
\end{align}
and
\begin{align}
    \label{G t}
    {{\mathbf{G}}^t}  \triangleq  {{\mathbf{A}}^t}{{\mathbf{S}}^T} = \left[ {{\mathbf{g}}_1^t,...,{\mathbf{g}}_N^t} \right],
\end{align}
which are two matrices of size $D\times M$ and size $D\times N$, respectively. In (\ref{G t}), ${{\mathbf{S}}^T}$ is the transpose of the data allocation matrix $\mathbf{S}$. In addition, we define ${{\mathbf{h}}^t}$, an $N\times 1$ vector, to indicate the identities of the devices in iteration $t$, where the $i$-th element being 1 implies device $i$ is honest in iteration $t$, and the the $i$-th element being 0 implies the opposite. 

Based on the above definitions, we can express
\begin{align}
    \label{honest g}
    {{\mathbf{\bar g}}^t} = \frac{1}{{\left| {{\mathcal{H}^t}} \right|}}\sum\limits_{i \in {\mathcal{H}^t}} {{\mathbf{g}}_i^t}  = {{\mathbf{G}}^t}{{\mathbf{h}}^t}\frac{1}{{\left( {1 - \alpha } \right)N}}.
\end{align}
From (\ref{honest g}), we have
\begin{align}
    \label{honest g norm}
  {\left\| {{{{\mathbf{\bar g}}}^t}} \right\|^2} &= \frac{1}{{{{\left( {1 - \alpha } \right)}^2}{N^2}}}{\left( {{{\mathbf{h}}^t}} \right)^T}{\left( {{{\mathbf{G}}^t}} \right)^T}{{\mathbf{G}}^t}{{\mathbf{h}}^t} \nonumber\\
  & = \frac{1}{{{{\left( {1 - \alpha } \right)}^2}{N^2}}}Tr\left[ {{{\mathbf{h}}^t}{{\left( {{{\mathbf{h}}^t}} \right)}^T}{{\left( {{{\mathbf{G}}^t}} \right)}^T}{{\mathbf{G}}^t}} \right],
\end{align}
where $Tr(\cdot)$ is the trace of a square matrix. According to (\ref{honest g norm}), we can derive 
\begin{align}
    \label{exp honest norm}
 & \mathbb{E}\left( {\left. {{{\left\| {{{{\mathbf{\bar g}}}^t}} \right\|}^2}} \right|{\mathcal{F}^t}} \right)\nonumber\\
  =& \frac{1}{{{{\left( {1 - \alpha } \right)}^2}{N^2}}}\mathbb{E}\left( {\left. {Tr\left[ {{{\mathbf{h}}^t}{{\left( {{{\mathbf{h}}^t}} \right)}^T}{{\left( {{{\mathbf{G}}^t}} \right)}^T}{{\mathbf{G}}^t}} \right]} \right|{\mathcal{F}^t}} \right) \nonumber \\
   =& \frac{1}{{{{\left( {1 - \alpha } \right)}^2}{N^2}}}Tr\left[ {\mathbb{E}\left[ {{{\mathbf{h}}^t}{{\left( {{{\mathbf{h}}^t}} \right)}^T}} \right]{{\left( {{{\mathbf{G}}^t}} \right)}^T}{{\mathbf{G}}^t}} \right].
\end{align}
In (\ref{exp honest norm}), we have
\begin{align}
    \label{exp hht}
    \mathbb{E}\left[ {{{\mathbf{h}}^t}{{\left( {{{\mathbf{h}}^t}} \right)}^T}} \right] = \left( {\phi_1 - \phi_2} \right){\mathbf{I}} + \phi_2{\mathbf{1}}{{\mathbf{1}}^T}.
\end{align}
This is derived from the fact that, in each iteration, a fraction $\alpha$ of the devices are Byzantine devices, which is totally random. From this perspective, for $i\neq j$, the probability of device $i$ and $j$ being honest devices is $\Pr \left( {h_i^t = 1,h_j^t = 1} \right)=\phi_2$
and the probability of device $i$ being honest is $\Pr \left( {h_i^t = 1} \right)=\phi_1$, where $\phi_1$ and $\phi_2$ are defined in (\ref{phi1 and 2}). 

Next, substituting (\ref{exp hht}) into (\ref{exp honest norm}), we have
\begin{align}
    \label{exp honest norm 1}
 & \mathbb{E}\left( {\left. {{{\left\| {{{{\mathbf{\bar g}}}^t}} \right\|}^2}} \right|{\mathcal{F}^t}} \right) \nonumber\\
  =& \frac{1}{{{{\left( {1 - \alpha } \right)}^2}{N^2}}}Tr\left[ {\left( {\left( {\phi_1-\phi_2} \right){\mathbf{I}} + \phi_2{\mathbf{1}}{{\mathbf{1}}^T}} \right){{\left( {{{\mathbf{G}}^t}} \right)}^T}{{\mathbf{G}}^t}} \right] \nonumber \\
   =& \frac{1}{{{{\left( {1 - \alpha } \right)}^2}{N^2}}}Tr\left[ {\left( {\phi_1-\phi_2} \right){\mathbf{I}}{{\left( {{{\mathbf{G}}^t}} \right)}^T}{{\mathbf{G}}^t}} \right] \nonumber\\
   &+ \frac{1}{{{{\left( {1 - \alpha } \right)}^2}{N^2}}}Tr\left[ {\phi_2{\mathbf{1}}{{\mathbf{1}}^T}{{\left( {{{\mathbf{G}}^t}} \right)}^T}{{\mathbf{G}}^t}} \right] \nonumber \\
   =& \frac{{\left( {\phi_1-\phi_2} \right)}}{{{{\left( {1 - \alpha } \right)}^2}{N^2}}}\sum\limits_{i = 1}^N {{{\left\| {{\mathbf{g}}_i^t} \right\|}^2}}  + \frac{\phi_2}{{{{\left( {1 - \alpha } \right)}^2}{N^2}}}Tr\left[ {{{\mathbf{1}}^T}{{\left( {{{\mathbf{G}}^t}} \right)}^T}{{\mathbf{G}}^t}{\mathbf{1}}} \right] \nonumber \\
   =& \frac{{\left( {\phi_1-\phi_2} \right)}}{{{{\left( {1 - \alpha } \right)}^2}{N^2}}}\sum\limits_{i = 1}^N {{{\left\| {{\mathbf{g}}_i^t} \right\|}^2}}  + \frac{\phi_2}{{{{\left( {1 - \alpha } \right)}^2}{N^2}}}{{{\left\| {\nabla F\left( {{{\mathbf{x}}^t}} \right)} \right\|}^2}},
\end{align}
based on (\ref{encoding}) and (\ref{G t}). In (\ref{exp honest norm 1}), we can derive the bound for $\sum\limits_{i = 1}^N {{{\left\| {{\mathbf{g}}_i^t} \right\|}^2}}$ in (\ref{bound sum}) by applying the basic inequality in (\ref{basic ineq 1}) and Assumption 2. 
\begin{figure*}
\begin{align}
    \label{bound sum}
 & \sum\limits_{i = 1}^N {{{\left\| {{\mathbf{g}}_i^t} \right\|}^2}}  = \sum\limits_{i = 1}^N {{{\left\| {\sum\limits_{k \in \left\{ {\left. k \right|s\left( {i,k} \right) \ne 0} \right\}} {\frac{1}{{{d_k}}}} \nabla {f_k}\left( {{{\mathbf{x}}^t}} \right)} \right\|}^2}}  \nonumber \\
   \leqslant& \frac{r}{{d_{\min }^2}}\sum\limits_{i = 1}^N {\sum\limits_{k \in \left\{ {\left. k \right|s\left( {i,k} \right) \ne 0} \right\}} {{{\left\| {\nabla {f_k}\left( {{{\mathbf{x}}^t}} \right)} \right\|}^2}} }  = \frac{r}{{d_{\min }^2}}\sum\limits_{i = 1}^N {\sum\limits_{k \in \left\{ {\left. k \right|s\left( {i,k} \right) \ne 0} \right\}} {{{\left\| {\nabla {f_k}\left( {{{\mathbf{x}}^t}} \right) - \frac{1}{M}\nabla F\left( {{{\mathbf{x}}^t}} \right) + \frac{1}{M}\nabla F\left( {{{\mathbf{x}}^t}} \right)} \right\|}^2}} }  \nonumber \\
   \leqslant& \frac{{2r}}{{d_{\min }^2}}\sum\limits_{i = 1}^N {\sum\limits_{k \in \left\{ {\left. k \right|s\left( {i,k} \right) \ne 0} \right\}} {\left\{ {{{\left\| {\nabla {f_k}\left( {{{\mathbf{x}}^t}} \right) - \frac{1}{M}\nabla F\left( {{{\mathbf{x}}^t}} \right)} \right\|}^2} + {{\left\| {\frac{1}{M}\nabla F\left( {{{\mathbf{x}}^t}} \right)} \right\|}^2}} \right\}} }  \nonumber \\
   \leqslant& \frac{{2r}}{{d_{\min }^2}}\sum\limits_{i = 1}^N {\sum\limits_{k \in \left\{ {\left. k \right|s\left( {i,k} \right) \ne 0} \right\}} {\left[ {{\beta ^2} + {{\left\| {\frac{1}{M}\nabla F\left( {{{\mathbf{x}}^t}} \right)} \right\|}^2}} \right]} }  = \frac{{2N{r^2}}}{{d_{\min }^2}}\left[ {{\beta ^2} + \frac{1}{{{M^2}}}{{\left\| {\nabla F\left( {{{\mathbf{x}}^t}} \right)} \right\|}^2}} \right].
\end{align}
\end{figure*}
Substituting (\ref{bound sum}) into (\ref{exp honest norm 1}), we have (\ref{bound honest average}), which completes the proof. 

\bibliographystyle{IEEEtran}   
\bibliography{reference}  

\begin{IEEEbiography}[{\includegraphics[width=1in,height=1.25in,clip,keepaspectratio]{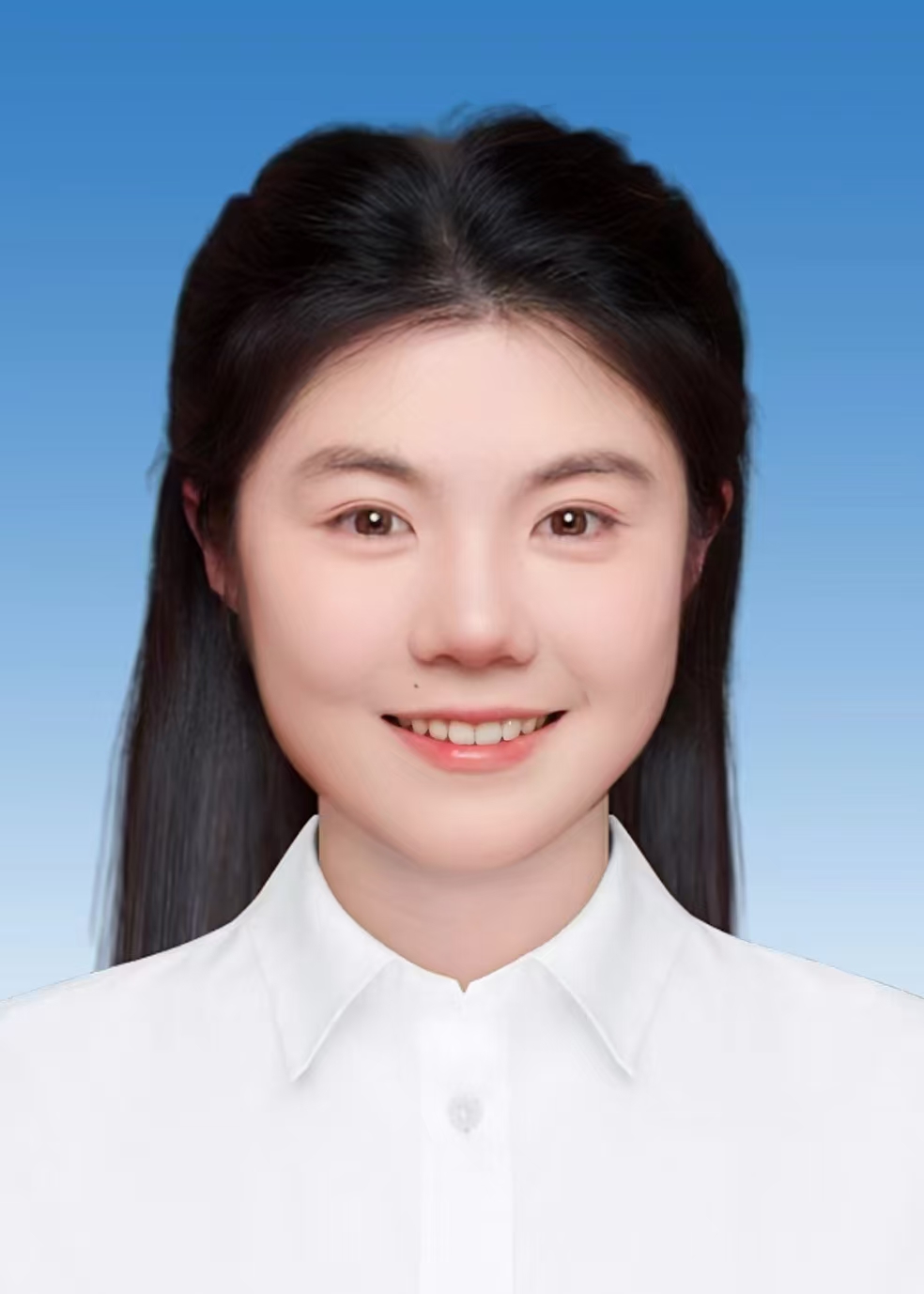}}]{Chengxi Li}
(Member, IEEE) received the B.S. degree in electronic engineering from the University of Electronic Science and Technology of China, Chengdu, China, in 2018, and the Ph.D. degree in electronic engineering from Tsinghua University, Beijing, China, in 2022. She is currently a Postdoctoral Researcher with the Division of Information Science and Engineering (ISE), KTH Royal Institute of Technology. Her research interests lie in distributed learning, federated learning, signal processing and information theory. She is a Digital Futures Postdoc Fellow and a Marie Skłodowska-Curie Actions (MSCA) Postdoc Fellow.
\end{IEEEbiography}

\begin{IEEEbiography}[{\includegraphics[width=1in,height=1.25in,clip,keepaspectratio]{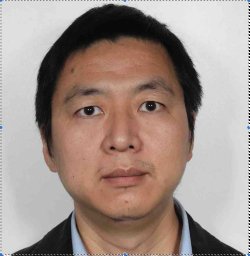}}]{Ming Xiao}
(Senior Member, IEEE) received the bachelor’s and master’s degrees in engineering from the University of Electronic Science and Technology of China, Chengdu, in 1997 and 2002, respectively, and the Ph.D. degree from the Chalmers University of Technology, Sweden, in November 2007. Since November 2007, he has been with the Department of Information Science and Engineering, School of Electrical Engineering and Computer Science, KTH Royal Institute of Technology, Sweden, where he is
currently a Professor. He received the IEEE Vehicular Technology Society Best Magazine Paper Award 2023. He was an Editor of IEEE TRANSACTIONS ON COMMUNICATIONS from 2012 to 2017, IEEE COMMUNICATIONS LETTERS (a Senior Editor) Since January 2015, and IEEE WIRELESS COMMUNICATIONS LETTERS from 2012 to 2016, and has
been an Editor of IEEE TRANSACTIONS ON WIRELESS COMMUNICATIONS since 2018. He has been an Area Editor of IEEE OPEN JOURNAL OF THE COMMUNICATION SOCIETY since 2019.
\end{IEEEbiography}

\begin{IEEEbiography}[{\includegraphics[width=1in,height=1.25in,clip,keepaspectratio]{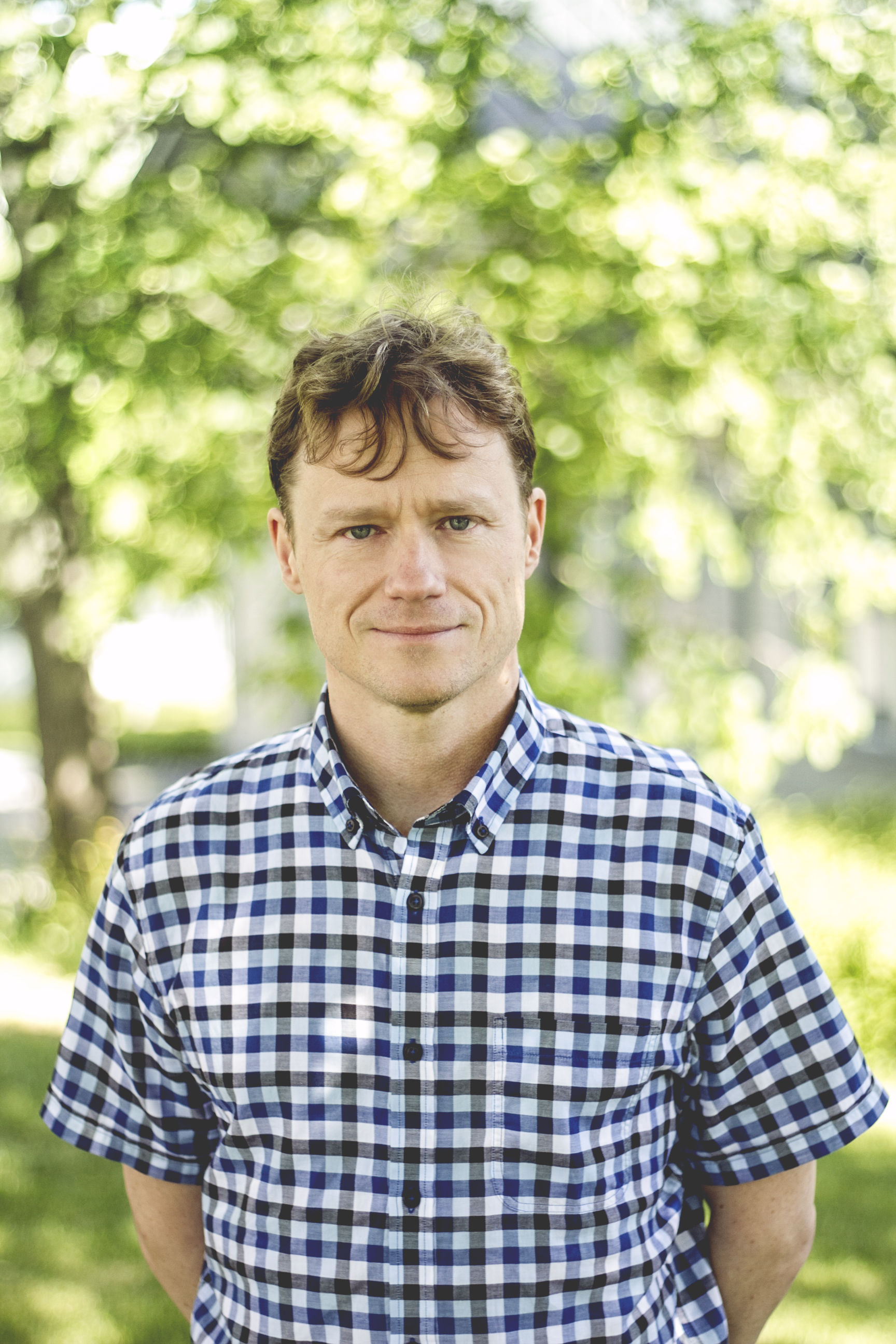}}]{Mikael Skoglund}
(Fellow, IEEE) received the Ph.D. degree from Chalmers University of Technology, Sweden, in 1997. In 1997, he joined the Royal Institute of Technology (KTH), Stockholm, Sweden, where he was appointed to the Chair in Communication Theory in 2003. At KTH, he heads the Division of Information Science and Engineering, as well as the Department of Intelligent Systems. Dr. Skoglund has worked on problems in source-channel coding, coding and transmission for wireless communications, Shannon theory, information-theoretic security, information theory for statistics and learning, information and control, and signal processing. He has authored and co-authored around 200 journal and more than 420 conference papers. From 2003 to 2008 he was an Associate Editor for IEEE TRANSACTIONS ON COMMUNICATIONS. In the interval 2008–2012 he was on the Editorial Board for IEEE TRANSACTIONS ON INFORMATION THEORY and starting in 2021 he joined it once again. He has served on numerous technical program committees for IEEE sponsored conferences, he was General Co-Chair for IEEE ITW 2019 and TPC Co-Chair for IEEE ISIT 2022. He is an Elected Member of the IEEE Information Theory Society Board of Governors.
\end{IEEEbiography}
\end{document}